\documentclass[pdflatex,sn-mathphys-ay]{sn-jnl}

\usepackage{graphicx}%
\usepackage{multirow}%
\usepackage{amsmath,amssymb,amsfonts}%
\usepackage{amsthm}%
\usepackage{mathrsfs}%
\usepackage[title]{appendix}%
\usepackage{xcolor}%
\usepackage{textcomp}%
\usepackage{manyfoot}%
\usepackage{booktabs}%
\usepackage{algorithm}%
\usepackage{algorithmicx}%
\usepackage{algpseudocode}%
\usepackage{listings}%

\usepackage{mathtools}

\theoremstyle{thmstyleone}%
\newtheorem{theorem}{Theorem}
\theoremstyle{thmstyletwo}%

\theoremstyle{thmstylethree}%
\newtheorem{definition}{Definition}%

\newtheorem{lemma}[theorem]{Lemma}
\newcommand{\Kset}{\mathcal{K}}
\newcommand{\Fset}{\mathcal{F}}
\newcommand{\duoE}{\mathbb{E}}

\newcommand{\Real}{\mathbb{R}}

\begin{document}

\title[Article Title]{Adversarial Bandit Optimization with Globally Bounded Perturbations to Linear Losses}


\author*[1]{\fnm{Zhuoyu} \sur{Cheng}}\email{cheng.zhuoyu.874@s.kyushu-u.ac.jp}

\author[2,3]{\fnm{Kohei} \sur{Hatano}}\email{hatano@inf.kyushu-u.ac.jp}

\author[2]{\fnm{Eiji} \sur{Takimoto}}\email{eiji@inf.kyushu-u.ac.jp}

\affil*[1]{\orgdiv{Joint Graduate School of Mathematics for Innovation}, \orgname{Kyushu University}, \country{Japan}}

\affil[2]{\orgdiv{Department of Informatics}, \orgname{Kyushu University}, \country{Japan}}

\affil[3]{\orgname{RIKEN AIP}, \country{Japan}}


\abstract{We study a class of adversarial bandit optimization problems in which the loss functions may be non-convex and non-smooth.
In each round, the learner observes a loss that consists of an underlying linear component together with an additional perturbation applied after the learner selects an action.
The perturbations are measured relative to the linear losses and are constrained by a global budget that bounds their cumulative magnitude over time.

Under this model, we establish both expected and high-probability regret guarantees.
As a special case of our analysis, we recover an improved high-probability regret bound for classical bandit linear optimization, which corresponds to the setting without perturbations.
We further complement our upper bounds by proving a lower bound on the expected regret.}

\keywords{Bandit optimization, self-concordant barrier, high-probability regret bound,      Globally bounded perturbations}



\maketitle

\section{Introduction}\label{sec1}
Bandit optimization can be viewed as a sequential interaction between a learner
and an adversary. The interaction unfolds over a finite horizon of $T$ rounds.
The problem is specified by a pair $(\mathcal{K}, \mathcal{F})$, where
$\mathcal{K} \subseteq \mathbb{R}^d$ denotes a bounded, closed, and convex action
set, and $\mathcal{F}$ is a class of loss functions mapping from $\mathcal{K}$ to
$\mathbb{R}$.
At each round $t \in [T]$, the learner selects an action $x_t \in \mathcal{K}$.
Subsequently, the adversary specifies a loss function $f_t \in \mathcal{F}$, and
the learner observes only the scalar loss value $f_t(x_t)$. The loss function
$f_t$ itself is not revealed to the learner. Throughout this paper, we restrict
attention to an oblivious adversary, meaning that the sequence of loss functions
$\{f_t\}_{t=1}^T$ is fixed prior to the start of the game.\footnote{We do not
consider adaptive adversaries that may select $f_t$ based on the learner’s past
actions $x_1,\ldots,x_{t-1}$.}
The performance of the learner is measured by the regret, defined as
\begin{align}
\sum_{t=1}^{T} f_t(x_t)
- \min_{x \in \mathcal{K}} \sum_{t=1}^{T} f_t(x),
\end{align}
which can be studied either in expectation or with high probability.

Bandit optimization with convex loss functions has been extensively investigated.
Early work by \cite{flaxman-etal:soda05} established regret bounds of order
$\mathcal{O}(d^{1/3} T^{3/4})$, while later results improved the dependence on the
horizon, including the information-theoretic bound
$\widetilde{\mathcal{O}}(d^{2.5}\sqrt{T})$ due to \cite{lattimore:msl20}. In the
special case of linear losses, \cite{abernethy2008competing} introduced the
SCRiBLe algorithm and obtained an expected regret bound of
$\mathcal{O}(d\sqrt{T\ln T})$, which is optimal in its dependence on $T$
\citep{bubeck2012towards}. High-probability guarantees for bandit linear
optimization were later established by \cite{lee2020bias}, who proposed a variant
of SCRiBLe with lifting and increasing learning rates and derived a regret bound of
$\widetilde{\mathcal{O}}(d^2\sqrt{T})$.

More recently, attention has also turned to bandit optimization with non-convex
loss functions. For instance, \cite{agarwal2019learning} studied smooth and
bounded non-convex losses and obtained regret bounds of order
$\mathcal{O}(\mathrm{poly}(d) T^{2/3})$. Related results under alternative
structural assumptions were presented by \cite{ghai2022non}, where non-convex
losses are assumed to admit a reparameterization into convex functions.

In this work, we focus on a class of bandit optimization problems with non-convex
and non-smooth loss functions that can be viewed as perturbations of linear
functions. More precisely, each loss function can be decomposed into a linear
term and an additive non-convex perturbation. The perturbations may be chosen
adversarially and are constrained by a total budget over the horizon $T$.
Specifically, the cumulative magnitude of the perturbations over the $T$ rounds
is bounded by a constant $C$.
This model is
fundamentally different from stochastic linear bandit settings
(e.g.,\cite{abbasi2011improved,amani2019linear}), and standard techniques for
handling stochastic noise are not directly applicable.

We develop a new approach for analyzing high-probability regret in this setting,
based on a novel decomposition of the regret.
Our main contributions can be summarized as follows:
\begin{enumerate}
    \item For $C \neq 0$, we differ from the analysis of SCRiBLe with
    lifting and increasing learning rates \citep{lee2020bias}. Instead, we work
    within a more general SCRiBLe framework \citep{abernethy2008competing} and
    derive both expected and high-probability regret bounds of
    \[
    \widetilde{\mathcal{O}}\!\left(d\sqrt{T} + d\sqrt{TC} + dC\right).
    \]

    \item For $C = 0$, the problem reduces to classical bandit linear
    optimization.  We obtain an
    improved high-probability regret bound compared to prior work
    \citep{lee2020bias}. In particular, for confidence level $1-\gamma$, our bound
    is
    \[
    \mathcal{O}\!\left(
    d\sqrt{T\ln T}
    + \ln T \sqrt{T\ln\!\left(\tfrac{\ln T}{\gamma}\right)}
    + \ln\!\left(\tfrac{\ln T}{\gamma}\right)
    \right).
    \]

    \item Finally, we establish a lower bound of order $\Omega(C)$ on the expected
    regret, demonstrating that the dependence on the perturbation budget $C$ in
    our upper bounds is unavoidable.
\end{enumerate}

\subsection{Relation to the previous version}
This manuscript is a revised and extended version of a previous conference publication \citep{cheng2025adversarial}. Compared to that version, the current paper extends the setting in several important ways. In the earlier work, the perturbation in each round was bounded by a constant $\epsilon$; here, we consider a more general setting where the sum of the absolute values of perturbations across $T$ rounds is bounded by a constant $C$, allowing for particularly large perturbations in some rounds.
Furthermore, we no longer rely on the more complex SCRiBLe with lifting algorithm \citep{cheng2025adversarial}, and instead employ the standard SCRiBLe algorithm \citep{abernethy2012interior}. This allows us to analyze regret directly in the original dimension $d$, without lifting the problem to $d+1$ dimensions. In addition, the previous Assumption 1 in \cite{cheng2025adversarial} is no longer required, as it imposed severe restrictions on the practical applicability of the analysis. Consequently, our formulation and results are more general and practically feasible.
Finally, we develop a new analytical approach to derive regret bounds, redesign the experiments.

\section{Related Work}
Bandit linear optimization was originally introduced by~\cite{awerbuch2004adaptive}, 
who studied the problem under oblivious adversaries and derived a regret bound of 
$\mathcal{O}(d^{3/5}T^{2/3})$. Subsequently, \cite{mcmahan2004online} extended the analysis to adaptive adversaries 
and established a regret bound of $\mathcal{O}(dT^{3/4})$.
A major research direction in bandit optimization focuses on gradient estimation obtained through smoothing-based methods.
In particular, \cite{flaxman-etal:soda05} proposed a seminal approach for constructing
unbiased gradient estimators from bandit feedback.
Building on this framework, \cite{abernethy2008competing} introduced the SCRiBLe
algorithm and proved an expected regret bound of
$\mathcal{O}(d\sqrt{T\ln T})$ against oblivious adversaries.
Later, \cite{bartlett2008high} obtained a high-probability regret bound of
$\mathcal{O}(d^{2/3}\sqrt{T\ln(dT)})$ under additional structural assumptions.
More recently, \cite{lee2020bias} developed a refined variant of SCRiBLe and derived a
high-probability regret bound of $\widetilde{O}(d^2\sqrt{T})$ that applies to both
oblivious and adaptive adversaries.
Beyond these classical results, \cite{ito2023best,ito2023exploration} proposed bandit
linear algorithms capable of adapting between stochastic and adversarial regimes,
achieving $\mathcal{O}(d\sqrt{T\ln T})$ regret in adversarial environments.
Additionally, \cite{rodemann2024reciprocal} explored a conceptual connection between
bandit optimization and Bayesian black-box optimization, providing a unified
perspective on regret guarantees across the two settings.

Unlike bandit convex optimization, which has been extensively studied, bandit
optimization with non-convex loss functions presents additional challenges due to the
difficulty of balancing exploration and exploitation in non-convex domains.
\cite{gao2018online} investigated non-convex losses in conjunction with non-stationary
data and obtained a regret bound of $O(\sqrt{T + poly(T)})$.
Similarly, \cite{yang2018optimal} established an $O(\sqrt{T\ln T})$ regret bound for
non-convex losses.
However, both works rely on smoothness assumptions, whereas the loss functions
considered in our setting are neither convex nor smooth.

\subsection{Comparison to SCRiBLe-Based Methods}

The original SCRiBLe algorithm \citep{abernethy2012interior} does not provide
high-probability regret guarantees.
To address this limitation, \cite{lee2020bias} introduced a more elaborate variant of
SCRiBLe, incorporating lifting techniques and increasing learning rates, together with
a substantially more involved regret analysis.

In contrast, we show that a high-probability regret bound can be obtained for the
standard SCRiBLe  algorithm \citep{abernethy2012interior} with only a minor modification (see Algorithm~\ref{alg1}) in the
setting of oblivious adversaries.
Below, we summarize the main distinctions between our approach and existing
SCRiBLe-based methods:

\begin{enumerate}
    \item \textbf{High-probability guarantees for standard SCRiBLe.}
    We derive a high-probability regret upper bound for the original SCRiBLe algorithm \citep{abernethy2012interior}, whereas prior results required a more sophisticated
    algorithmic variant with additional mechanisms \citep{lee2020bias}.

    \item \textbf{Simplified regret analysis.}
    While the analysis in \cite{lee2020bias} introduces significant technical
    complexity in the oblivious setting, our regret decomposition leads to a more
    streamlined and transparent proof.

    \item \textbf{Improved dependence on dimension.}
    The regret bound obtained in \cite{lee2020bias} exhibits a stronger dependence on
    the dimension $d$, whereas our analysis exhibits a substantially reduced dependence
    on $d$, as highlighted in the introduction for the special case
    $C=0$.

    \item \textbf{Generality of the problem formulation.}
    Similar in spirit to SCRiBLe \citep{abernethy2012interior}, our framework treats
    bandit linear optimization as a special instance of a more general class of
    problems.
\end{enumerate}

\section{Preliminaries}
 This section introduces some necessary notations and defines a $C$-approximately linear function sequence. Then we give our problem setting.

\subsection{Notation}
We abbreviate the $2$-norm $\| \cdot \|_2$ as $\|\cdot\|$. 
For a twice differentiable convex function $\mathcal{R}: \Real^d \to \Real$ and 
any $x, h\in \Real^d$, let  
$\| h \|_{x}=
\| h \|_{\nabla^{2}\mathcal{R}(x)}=\sqrt{h^{\top}\nabla^{2}\mathcal{R}(x)h}$, 
and 
$\| h \|_{x}^{*}=
\| h \|_{(\nabla^{2}\mathcal{R}(x))^{-1}}=\sqrt{h^{\top}(\nabla^{2}\mathcal{R}(x))^{-1}h}$, 
respectively.

Let $\mathbb{S}^d_{1}=\{x\mid\| x \| = 1\}$ and $\mathbb{B}^d_{1}=\{x\mid\| x \| \leq 1\}$.
The vector $e_{i} \in \Real^d$ is a standard basis vector with a value of $1$ in the 
$i$-th position and $0$ in all other positions.
$I$ is an identity matrix with dimensionality implied by context.
$\mathbb{E}_t[\cdot]$ denotes the conditional expectation
given the history up to round $t-1$.

\subsection{Problem Setting}
Let $\mathcal{K} \subseteq \mathbb{R}^{d}$ be a bounded, closed, and convex set
that is centrally symmetric, i.e., for any $x \in \mathcal{K}$, we also have
$-x \in \mathcal{K}$.
Assume further that $\mathcal{K}$ has diameter at most $D$, namely,
$\|x - y\| \le D$ for all $x, y \in \mathcal{K}$.
Moreover, we assume that $\mathcal{K}$ contains the unit ball.
For any $\delta \in (0,1)$, define the shrunk set
\(
\mathcal{K}_{\delta}
=
\bigl\{ x \in \mathbb{R}^{d} \;\big|\; \tfrac{1}{1-\delta} x \in \mathcal{K} \bigr\}.
\)

\begin{definition}[$C$-approximately linear function sequence]\label{def1}
A sequence of functions $\{f_t\}_{t=1}^T$ with
$f_t : \mathcal{K} \to \mathbb{R}$ is called
\emph{$C$-approximately linear}
if there exists a sequence
$\{\theta_t\}_{t=1}^T \subset \mathbb{R}^d$
such that
\[
\sup_{x_1,\ldots,x_T \in \mathcal{K}}
\sum_{t=1}^T
\left| f_t(x_t) - \theta_t^\top x_t \right|
\leq C.
\]
\end{definition}
\noindent
For convenience, define
\(
\sigma_t(x) = f_t(x) - \theta_t^\top x.
\)
Then, by the above definition,
\[
\sum_{t=1}^{T} \bigl| \sigma_t(x_t) \bigr|
\le C,
\quad \forall x_1,\ldots,x_T \in \mathcal{K}.
\]

In this paper, we consider the bandit optimization problem $(\Kset, \Fset)$, where
$\Fset$ is the class of loss functions of the form
$f(x) = \theta^\top x + \sigma(x)$ with $\|\theta\| \le G$.
Across $T$ rounds, the adversary selects a $C$-approximately linear function sequence
$\{f_t\}_{t=1}^T \subset \Fset$. 
We assume $\frac{C}{T} \le \frac{2}{3}$, which ensures that the linear component remains
dominant in each loss (thus keeping the problem within the approximately linear
regime), and also guarantees that the regret upper bound established in
Lemma~\ref{lemm:control-h} holds.
We also make a standard technical assumption that $|f(x)| \le 1$ for all $x \in \mathcal{K}$.
This assumption is imposed purely for convenience and is not essential:
our results extend directly to the case where $|f(x)|$ is bounded by any finite constant.

Bandit optimization with a $C$-approximately linear function sequence is defined as follows.
At each round $t = 1, \ldots, T$, the player selects an action
$x_t \in \mathcal{K}$ without knowing the loss function in advance.
The environment, modeled as an oblivious adversary, chooses a sequence of linear loss
vectors $\{\theta_t\}_{t=1}^T \subset \mathbb{R}^d$ before the interaction begins and
independently of the player's actions.
After the player selects $x_t$, the perturbations $\sigma_t(x_t)$ are chosen adversarially,
with the only restriction being the cumulative magnitude
\(
\sum_{t=1}^T |\sigma_t(x_t)| \le C.
\)
The player then observes only the scalar loss
\(
f_t(x_t) = \theta_t^\top x_t + \sigma_t(x_t).
\)
The goal of the player is to minimize the regret
\(
\sum_{t=1}^T f_t(x_t) 
- \min\limits_{x \in \mathcal{K}} \sum_{t=1}^T f_t(x)
\).

\section{Main Results}
In this section, we first introduce SCRiBLe \citep{abernethy2008competing}, followed by presenting the main contributions of this paper with detailed explanations.

\subsection[SCRiBLe]{SCRiBLe \citep{abernethy2008competing}}

When $C=0$, our problem reduces to bandit linear optimization, which has been
widely studied in the literature.
To introduce our results, we briefly revisit the SCRiBLe algorithm
\citep{abernethy2008competing}, which is designed for bandit linear optimization.
SCRiBLe utilizes a $\nu$-self-concordant barrier $\mathcal{R}$, which always exists
on the action set $\mathcal{K}$.
At each round $t$, the player selects a point $x_t \in \mathcal{K}$, while the
actual point played is
\(
y_t = x_t + \mathbf{A}_t \mu_t,
\)
where $\mathbf{A}_t = [\nabla^2 \mathcal{R}(x_t)]^{-1/2}$ and
$\mu_t$ is sampled uniformly at random from the $d$-dimensional unit sphere
$\mathbb{S}^d_1$.
The algorithm maintains a sequence $\{x_t\}_{t=1}^T \subset \mathcal{K}$, which is
updated according to
\(
x_{t+1}
= \arg\min\limits_{x \in \mathcal{K}}
\left\{
\eta \sum_{\tau=1}^t g_\tau^\top x + \mathcal{R}(x)
\right\},
\)
where $\eta$ is the learning rate and $g_\tau$ is an estimator of $\theta_\tau$.
Using only the scalar feedback $f_t(y_t) = \theta_t^\top y_t$, SCRiBLe constructs
an unbiased estimator
\(
g_t = d f_t(y_t)\mathbf{A}_t^{-1}\mu_t
\)
of the linear loss vector $\theta_t$.

\begin{algorithm}[h]
    \caption{SCRiBLe}
    \label{alg1}
    \begin{algorithmic}[1]
        \Require
        
        $T$, parameters $\eta\in\Real, \delta\in(0, 1)$,
        $\nu$-self-concordant barrier $\mathcal{R}$ on $\mathcal{K}$.
        \State Initialize: $x_1=\arg\min\limits_{x\in\mathcal{K_{\delta}}}{\mathcal{R}(x)}$
        \For{$t=1,..,T$}
            \State let $\mathbf{A}_{t}=[\nabla^2\mathcal{R}(x_{t})]^{-\frac{1}{2}}$ \State  Draw $\mu_{t}$ from $\mathbb{S}_{1}^{d}$ uniformly, set $y_t=x_{t}+\mathbf{A}_{t}\mu_{t}$.  
            \State Play $y_{t}$, observe and incur loss $f_{t}(y_{t})$. Let $g_{t}=df_{t}(y_{t})\mathbf{A}_{t}^{-1}\mu_{t}$.
            \State Update $x_{t+1}=\arg\min\limits_{x\in\mathcal{K_{\delta}}}{\eta\sum_{\tau=1}^{t}g_{\tau}^{\top}x+\mathcal{R}(x)}$
        \EndFor
    \end{algorithmic}
\end{algorithm}
\noindent
For the decision set $\mathcal{K}$, we apply Algorithm~\ref{alg1} to a
$C$-approximately linear function sequence.

Algorithm~\ref{alg1} differs only minimally from SCRiBLe
\citep{abernethy2008competing}, except that the points $x_t$ are chosen from the
shrunk set $\mathcal{K}_\delta$ instead of $\mathcal{K}$.
This restriction ensures that $x_t$ remains sufficiently far from the boundary
of $\mathcal{K}$, thereby preventing the Hessian
$\nabla^2 \mathcal{R}(x_t)$ from blowing up, since the $\nu$-self-concordant barrier
$\mathcal{R}$ diverges as $x$ approaches the boundary of $\mathcal{K}$.
The restriction to $\mathcal{K}_\delta$ is required solely to control error terms
involving $\sigma_t(y_t)$.
When the loss functions are purely linear, this restriction becomes unnecessary,
and the standard SCRiBLe algorithm can be applied directly.

Algorithm~\ref{alg1} preserves the main advantages of SCRiBLe.
In particular, a $\nu$-self-concordant barrier $\mathcal{R}$ always exists on
$\mathcal{K}$, allowing us to exploit its geometric properties.
When the loss takes the form $f_t(y_t) = \theta_t^\top y_t$ (i.e., when $C=0$),
the estimator $g_t$ is unbiased for $\theta_t$.
However, in our setting, the estimator
\(
g_t = d \bigl( \theta_t^\top y_t + \sigma_t(y_t) \bigr)\mathbf{A}_t^{-1}\mu_t
\)
is inevitably influenced by the perturbation term $\sigma_t(y_t)$.
As a consequence, our analysis must carefully control terms of the form
\(
(d\sigma_t(y_t)\mathbf{A}_t^{-1}\mu_t)^\top (x_t - h)
\),
where $h \in \mathcal{K}$.
Although the Cauchy--Schwarz inequality yields
\(
(d\sigma_t(y_t)\mathbf{A}_t^{-1}\mu_t)^\top (x_t - h)
\le
\| d\sigma_t(y_t)\mathbf{A}_t^{-1}\mu_t \|^{*}_{\nabla^2\mathcal{R}(x_t)}
\| x_t - h \|_{\nabla^2\mathcal{R}(x_t)},
\)
together with the bound
\(
\| d\sigma_t(y_t)\mathbf{A}_t^{-1}\mu_t \|^{*}_{\nabla^2\mathcal{R}(x_t)}
\le d|\sigma_t(y_t)|
\)
(see inequality~\ref{eq:boundM2}),
the original SCRiBLe analysis \citep{abernethy2008competing} does not provide a way to control
$\| x_t - h \|_{\nabla^2\mathcal{R}(x_t)}$.
Moreover, since the largest eigenvalue of
$\nabla^2\mathcal{R}(x_t)$ may diverge near the boundary
\citep{nemirovski2004interior},
bounding this term poses a significant analytical challenge.



We present our main results: expected and high-probability regret bounds for the problem.
\begin{theorem}
\label{Theo:expected-regret}
For any $\delta \in (0,1)$, the algorithm with parameter
\(
\eta = \frac{\sqrt{\nu \ln \frac{1}{\delta}}}{2d\sqrt{T}}
\)
guarantees the following expected regret bound:
\begin{equation}
\mathbb{E}\Bigg[
\sum_{t=1}^{T} f_t(y_t)
- \min_{x\in \mathcal{K}} \sum_{t=1}^{T} f_t(x)
\Bigg]
\leq
4d\sqrt{\nu T \ln \frac{1}{\delta}}
+ 2Cd(\nu + 2\sqrt{\nu})\frac{1-\delta}{\delta}
+ \delta GDT + 2C .
\end{equation}
\end{theorem}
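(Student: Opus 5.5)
We will follow the standard SCRiBLe analysis and isolate the perturbation contribution as a separate error term. Fix the comparator $x^\star \in \arg\min_{x\in\Kset}\sum_t f_t(x)$ and let $h = (1-\delta)x^\star \in \Kset_\delta$. Since $f_t = \theta_t^\top x + \sigma_t(x)$, we first reduce to linear regret. We have $\sum_t f_t(y_t) \le \sum_t \theta_t^\top y_t + \sum_t|\sigma_t(y_t)|$ and $\sum_t f_t(x^\star) \ge \sum_t \theta_t^\top x^\star - C$, and the $C$-approximately linear condition applies to both sequences. Moreover, $\sum_t \theta_t^\top(h - x^\star) \le \delta GDT$ because $\|x^\star - h\| = \delta\|x^\star\|\le D$. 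Since $\duoE_t[y_t]=x_t$, the bound reduces to
\[
\duoE\Big[\sum_t \theta_t^\top (x_t - h)\Big] + 2C + \delta GDT .
\]

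Next, write $g_t = \tilde g_t + e_t$, where $\tilde g_t = d(\theta_t^\top y_t)\mathbf{A}_t^{-1}\mu_t$ is unbiased for $\theta_t$ and $e_t = d\sigma_t(y_t)\mathbf{A}_t^{-1}\mu_t$. Then
\[
\duoE\Big[\sum_t\theta_t^\top(x_t-h)\Big] = \duoE\Big[\sum_t g_t^\top(x_t-h)\Big] - \duoE\Big[\sum_t e_t^\top(x_t-h)\Big].
\]
The first term is controlled by the FTRL-with-barrier lemma on $\Kset_\delta$:
\[
\sum_t g_t^\top(x_t-h) \le \frac{\mathcal R(h)-\mathcal R(x_1)}{\eta} + 2\eta\sum_t (\|g_t\|_{x_t}^*)^2 .
\]
Here $\|g_t\|^*_{x_t}\le d|f_t(y_t)|\le d$, and $\mathcal R(h)-\mathcal R(x_1)\le \nu\ln\frac1\delta$ holds because $h$ lies in $\Kset_\delta$, so its Minkowski function is at most $1-\delta$. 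The usual step-size condition $\eta\|g_t\|^*_{x_t}\le 1/2$ holds for the chosen $\eta$ once $T$ is large. Plugging in $\eta = \frac{\sqrt{\nu\ln(1/\delta)}}{2d\sqrt T}$ yields the $4d\sqrt{\nu T\ln\frac1\delta}$ term.

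The main obstacle is the perturbation term $|e_t^\top(x_t-h)| \le d|\sigma_t(y_t)|\,\|x_t-h\|_{x_t}$, which needs a uniform bound on $\|x_t-h\|_{x_t}$ for $x_t,h\in\Kset_\delta$. I would proceed in two steps. First, apply the triangle inequality through the analytic center $x_1$ (or the origin, by central symmetry):
\[
\|x_t-h\|_{x_t}\le \|x_t-x_1\|_{x_t}+\|h-x_1\|_{x_t}.
\]
Second, use the standard self-concordant-barrier facts, namely $\|u - x\|_{x}\le \nu+2\sqrt\nu$ for $u\in\Kset$ (Nemirovski) together with the Hessian comparison
\[
\nabla^2\mathcal R(x)\preceq \Big(\frac{1}{1-\pi_{x_1}(x)}\Big)^2 \nabla^2\mathcal R(x_1).
\]
Alternatively, the Dikin-ellipsoid scaling gives that the unit Dikin ball at $x\in\Kset_\delta$ contains a scaled copy of $\Kset$ of size $\propto\delta/(1-\delta)$ around $x$. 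Either route should give
\[
\|x_t-h\|_{x_t}\le 2(\nu+2\sqrt\nu)\frac{1-\delta}{\delta}.
\]
Summing and using $\sum_t|\sigma_t(y_t)|\le C$ then gives the term $2Cd(\nu+2\sqrt\nu)\frac{1-\delta}{\delta}$.

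The delicate point is getting exactly the factor $(1-\delta)/\delta$ from the Minkowski-function bound $\pi_{x_1}(x)\le 1-\delta$. I expect the cleaner route to be the inequality $\|u-x\|_x \le \frac{\nu+2\sqrt\nu}{\,\cdot\,}$ type scaling applied to the ray through $x$. Combining all the pieces gives the stated bound.
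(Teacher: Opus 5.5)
\textbf{There is a genuine gap: the uniform bound on $\|x_t-h\|_{x_t}$ is never proved.}

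Your architecture matches the paper's. You split off the perturbations at cost $2C$, remove $\theta_t^\top(y_t-x_t)$ via $\duoE_t[\mathbf{A}_t\mu_t]=0$, and run the FTRL bound on $\mathcal{K}_\delta$ against $h=(1-\delta)x^\star$, paying $\delta GDT$. You then bound the error term by $d|\sigma_t(y_t)|\,\|x_t-h\|_{x_t}$ using $\|\mathbf{A}_t^{-1}\mu_t\|^*_{x_t}=1$. This is what Lemma~\ref{lemm:bound-regret}, together with Lemmas~\ref{lemm:close-property} and~\ref{lemm:FTRL}, packages.

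The gap is the step everything hinges on: $\|x_t-h\|_{x_t}\le 2(\nu+2\sqrt{\nu})\frac{1-\delta}{\delta}$, which is the paper's Lemma~\ref{lemm:control-h}. The ingredients you offer for it do not deliver it.
\begin{itemize}
\item The ``fact'' that $\|u-x\|_x\le\nu+2\sqrt{\nu}$ for all $u\in\mathcal{K}$ is false at a general interior point. Take $\mathcal{K}=[-1,1]$, $\mathcal{R}(x)=-\ln(1-x)-\ln(1+x)$ (so $\nu=2$), $x=1-\epsilon$ and $u=-1$. Then $\|u-x\|_x\ge(2-\epsilon)/\epsilon\to\infty$. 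The correct statement requires $\nabla\mathcal{R}(x)^\top(u-x)\ge 0$, for example when $x$ is the analytic center.
\item You assert the Hessian comparison with no $\nu$-dependent constant, but do not prove it. The comparison that follows from standard barrier facts carries an extra factor $\nu+2\sqrt{\nu}$. With it, your first route gives $(\nu+2\sqrt{\nu})^2$ rather than $\nu+2\sqrt{\nu}$. Routing through $x_1$ instead of $0$ also needs $x_1=0$ to get $\pi_{x_1}(x_t)\le 1-\delta$.
\item Your second route points the right way but is unsupported. The Dikin property available to you (Lemma~\ref{lemm:normal-property}) gives only the opposite inclusion, Dikin ball inside $\mathcal{K}$. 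The scale $\propto\delta/(1-\delta)$ is also not what the geometry produces.
\end{itemize}

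\textbf{The missing idea is to use central symmetry at $x$ itself.}
\begin{itemize}
\item For $x\in\mathcal{K}_\delta$ and $v\in\mathcal{K}$, convexity and $-v\in\mathcal{K}$ give $x\pm\delta v=(1-\delta)\frac{x}{1-\delta}+\delta(\pm v)\in\mathcal{K}$.
\item One of $\pm\delta v$ has nonnegative inner product with $\nabla\mathcal{R}(x)$. The half-space form of the Nesterov--Nemirovski bound says $\|u-x\|_x\le\nu+2\sqrt{\nu}$ whenever $u\in\mathcal{K}$ and $\nabla\mathcal{R}(x)^\top(u-x)\ge 0$. Applied here, it yields $\delta\|v\|_x\le\nu+2\sqrt{\nu}$ for every $v\in\mathcal{K}$.
\item Equivalently, the unit Dikin ball at $x$ contains $x+\frac{\delta}{\nu+2\sqrt{\nu}}\mathcal{K}$.
\item Since $x_t-h=(1-\delta)\bigl(\frac{x_t}{1-\delta}-\frac{h}{1-\delta}\bigr)$ is $(1-\delta)$ times a difference of two points of $\mathcal{K}$, the triangle inequality gives $\|x_t-h\|_{x_t}\le 2(1-\delta)(\nu+2\sqrt{\nu})/\delta$.
\end{itemize}
This is exactly the stated constant, with no reference to $x_1$ and no Hessian comparison. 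With it, the rest of your argument closes.

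Two minor remarks:
\begin{itemize}
\item Your justification of $\mathcal{R}(h)-\mathcal{R}(x_1)\le\nu\ln\frac{1}{\delta}$ also silently takes $x_1=0$, because the Minkowski function in Lemma~\ref{lemm:normal-log-property} is centred at $x_1$. In general the same symmetry argument gives $\nu\ln\frac{2-\delta}{\delta}$.
\item Your ``once $T$ is large'' caveat costs nothing. If $\eta d>\frac14$, then $\nu\ln\frac1\delta>T/4$, and the right-hand side already exceeds $2dT\ge 2T$. That is a trivial bound on the regret when $|f_t|\le 1$.
\end{itemize}
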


\noindent
In particular, we choose
\[
\delta =
\begin{cases}
\frac{1}{T^{2}}, & C = 0, \\
\frac{C}{T}, & C \neq 0,
\end{cases}
\]
to optimize the order of the regret bound.

\begin{theorem}
\label{Theo:high-probability-regret}
For any $\delta \in (0,1)$, the algorithm with parameter
\(
\eta = \frac{\sqrt{\nu \ln \frac{1}{\delta}}}{2d\sqrt{T}}
\)
ensures that, with probability at least $1-\gamma$,
\begin{equation}
\begin{aligned}
\sum_{t=1}^{T} f_t(y_t)
- \min_{x\in\mathcal{K}} \sum_{t=1}^{T} f_t(x)
\leq\;&
4d\sqrt{\nu T\ln \frac{1}{\delta}}
+ 2Cd(\nu + 2\sqrt{\nu})\frac{1-\delta}{\delta}
+ \delta GDT \\
&+ 2C
+ S\Big(GD\sqrt{8T \ln\frac{S}{\gamma}}
+ 2GD\ln\frac{S}{\gamma}\Big),
\end{aligned}
\end{equation}
where
$S=\lceil \ln  GD \rceil\lceil \ln ((GD)^{2}T) \rceil$.
\end{theorem}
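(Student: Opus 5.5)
We prove Theorem~\ref{Theo:high-probability-regret} by reusing the deterministic part of the proof of Theorem~\ref{Theo:expected-regret}. We then control the deviation between realized and conditionally expected quantities with a martingale concentration argument. The extra $S$ factor comes from a peeling or union bound over an unknown variance scale.

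Fix the comparator $x^\star\in\arg\min_{x\in\mathcal{K}}\sum_t f_t(x)$ and its shrunk version $h=(1-\delta)x^\star\in\mathcal{K}_\delta$. Write $f_t(y_t)=\theta_t^\top y_t+\sigma_t(y_t)$, and note $\sum_t|\sigma_t(y_t)|\le C$ and $\sum_t|\sigma_t(x^\star)|\le C$. This bounds the regret by
\[
\sum_t \theta_t^\top (y_t-x_t) \;+\; \sum_t \theta_t^\top(x_t-h) \;+\; \delta GDT \;+\; 2C .
\]
We split the middle term as $\sum_t (\theta_t-\mathbb{E}_t g_t)^\top(x_t-h)+\sum_t (\mathbb{E}_t g_t-g_t)^\top(x_t-h)+\sum_t g_t^\top(x_t-h)$. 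The last sum is the FTRL regret against the estimated losses. It is bounded deterministically, as in the expected analysis, by $\eta^{-1}\mathcal{R}(h)+\eta\sum_t(\|g_t\|_{x_t}^*)^2\le \frac{\nu}{\eta}\ln\frac1\delta+4\eta d^2T$ (using $\|g_t\|^*_{x_t}\le d$), which gives $4d\sqrt{\nu T\ln\frac1\delta}$ for the stated $\eta$. The bias term involves $\mathbb{E}_t[d\sigma_t(y_t)\mathbf{A}_t^{-1}\mu_t]$. It is handled by whatever earlier lemma produced $2Cd(\nu+2\sqrt\nu)\frac{1-\delta}{\delta}$ in Theorem~\ref{Theo:expected-regret}: namely (Lemma~\ref{lemm:control-h}) $\|x_t-h\|_{x_t}\le(\nu+2\sqrt\nu)\frac{1-\delta}{\delta}$-type control for $x_t\in\mathcal{K}_\delta$, combined with inequality~\ref{eq:boundM2} and $\sum_t|\sigma_t|\le C$. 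Since that bound holds pathwise for each realized $\sigma_t(y_t)$, it also controls the perturbation part of $g_t$ without expectation. This leaves only the linear part of the estimator, which is unbiased given the history.

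What remains are two martingale difference sequences: $Z_t=\theta_t^\top(y_t-x_t)$, and $W_t=(\theta_t-\tilde g_t)^\top(x_t-h)$ with $\tilde g_t=d(\theta_t^\top y_t)\mathbf{A}_t^{-1}\mu_t$. Both have zero conditional mean: $\mathbb{E}_t\mu_t=0$ and $\mathbb{E}_t\tilde g_t=\theta_t$. The adversary is oblivious, so $\theta_t$ and $h$ are fixed. We apply Freedman's inequality to the combined sequence $X_t=\theta_t^\top y_t-\tilde g_t^\top(x_t-h)-\theta_t^\top h$, whose increments are of size $O(GD)$. For this we need a range bound $|X_t-\mathbb{E}_tX_t|\le 2GD$ and a conditional variance at most $O((GD)^2)$ per round. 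The variance bound uses $\tilde g_t^\top(x_t-h)=d(\theta_t^\top y_t)\mu_t^\top\mathbf{A}_t^{-1}(x_t-h)$ and $\mathbb{E}[\mu\mu^\top]=I/d$.

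The main obstacle is that the variance $V_T=\sum_t\mathrm{Var}_t(X_t)$ is random and its scale is unknown. Freedman's inequality gives $\sqrt{2V\ln(1/\gamma')}+b\ln(1/\gamma')$ only for a fixed bound $V$. We therefore peel: we partition the possible ranges of the range parameter $b\in[1,GD]$ and of $V\in[1,(GD)^2T]$ into geometric grids. This gives $\lceil\ln GD\rceil$ and $\lceil\ln((GD)^2T)\rceil$ cells, hence $S$ cells in total. We apply Freedman in each cell with $\gamma'=\gamma/S$ and take a union bound. In each cell the deviation is at most $GD\sqrt{8T\ln\frac S\gamma}+2GD\ln\frac S\gamma$. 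Summing crudely over the $S$ cells, as the statement's factor $S$ suggests, gives the last term. Adding the deterministic bounds from the previous paragraph then yields the claimed inequality with probability at least $1-\gamma$.
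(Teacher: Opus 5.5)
\textbf{The gap.} The step that fails is your claim that the combined sequence $X_t=\theta_t^\top y_t-\tilde g_t^\top(x_t-h)-\theta_t^\top h$ has increments $O(GD)$ and conditional variance $O((GD)^2)$.

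The piece $Z_t=\theta_t^\top(y_t-x_t)$ is fine: $|Z_t|\le GD$, since $y_t$ lies on the Dikin ellipsoid and hence in $\mathcal{K}$. The piece $W_t=(\theta_t-\tilde g_t)^\top(x_t-h)$ is not. Write $\tilde g_t^\top(x_t-h)=d(\theta_t^\top y_t)\,\mu_t^\top\mathbf{A}_t^{-1}(x_t-h)$. The factor $\mu_t^\top\mathbf{A}_t^{-1}(x_t-h)$ is controlled only by $\|\mathbf{A}_t^{-1}(x_t-h)\|=\|x_t-h\|_{x_t}$, which is a local-norm quantity. The very computation you propose ($\mathbb{E}[\mu\mu^\top]=I/d$, with the cross term vanishing by symmetry of $\mu_t$) gives
\[
\mathbb{E}_t\bigl[(\tilde g_t^\top(x_t-h))^2\bigr]\;\ge\; d\,(\theta_t^\top x_t)^2\,\|x_t-h\|_{x_t}^2 .
\]
No function of $G$ and $D$ bounds this, because $\nabla^2\mathcal{R}$ blows up near the boundary. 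For the log-barrier on $[-1,1]$ with $x_t=1-\delta$ and $h=-(1-\delta)$, one already has $\|x_t-h\|_{x_t}\approx 2/\delta$.

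The best available control is Lemma~\ref{lemm:control-h}, $\|x_t-h\|_{x_t}\le 2(\frac1\delta-1)(\nu+2\sqrt\nu)$. Plugging it into Freedman gives a deviation of order $\nu\sqrt{dT}/\delta$ up to logarithms. That is $\nu\sqrt d\,T^{5/2}$ for $\delta=1/T^2$ and $\nu\sqrt d\,T^{3/2}/C$ for $\delta=C/T$, nothing like the $GD$-scale last term of the statement. Up to replacing $g_t$ by $\tilde g_t$, your combined $X_t$ is exactly the \textsc{Deviation-Term} of \cite{lee2020bias} that the paper contrasts with its own decomposition. Its range is governed by $\|x_t-h\|_{x_t}$, and controlling it is what lifting and increasing learning rates were introduced for.

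\textbf{The paper's route.} The paper's proof never meets $W_t$. It writes the regret as $\sum_t\theta_t^\top(y_t-x_t)+\sum_t\theta_t^\top(x_t-x^*)+2C$. It bounds the middle sum directly by Lemma~\ref{lemm:bound-regret} applied to the realized iterates. That single lemma already contains the FTRL term, the perturbation part of $g_t$ (via Lemma~\ref{lemm:control-h}), and the $\delta GDT$ shrinkage cost that you rederive. It then concentrates only $X_t=\theta_t^\top(y_t-x_t)$, for which $\mathbb{E}_tX_t=0$, $|X_t|\le GD$ and $\sum_t\mathbb{E}_tX_t^2\le G^2D^2T$. Lemma~\ref{lemm:martingale} with $B^*=b=GD$ and $V=G^2D^2T$ then yields the last term verbatim. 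No peeling of your own is needed: the range and variance bounds are deterministic, and the factor $S$ simply comes from citing Lemma~\ref{lemm:martingale}.

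To reproduce the paper, you would delete $W_t$ and invoke Lemma~\ref{lemm:bound-regret} as a trajectory-wise bound. Be aware, though, that this is precisely where your $W_t$ is absorbed. The FTRL argument (Lemmas~\ref{lemm:close-property} and~\ref{lemm:FTRL}) bounds $\sum_t g_t^\top(x_t-h)$ pathwise. Turning that into a pathwise bound on $\sum_t\theta_t^\top(x_t-h)$ is exactly the step your decomposition exposes. The paper's argument rests on Lemma~\ref{lemm:bound-regret} holding realization by realization rather than only in expectation. Your proposal does not supply a valid substitute for that step, and the lower bound above shows that a sharper variance computation for $W_t$ will not provide one.
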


\noindent
In particular, we choose
\[
\delta =
\begin{cases}
\frac{1}{T^{2}}, & C = 0, \\
\frac{C}{T}, & C \neq 0,
\end{cases}
\]
to optimize the order of the high-probability regret bound.

\subsection{Analysis}
We note that high-probability regret guarantees for bandit linear optimization have also been established in prior work, such as \cite{lee2020bias}, through a more intricate analysis based on a lifting technique that augments the problem into a higher-dimensional space before conducting the regret analysis. In contrast, our approach relies on a different and more intuitive regret decomposition carried out directly in the original space, leading to a simpler and more transparent analysis. Importantly, our arguments apply to a more general algorithmic framework \citep{abernethy2008competing} and do not require lifting or dimensional augmentation.

We primarily divide the regret into parts:
\begin{equation}
    Regret = \underbrace{
    \sum_{t=1}^{T}(x_{t}-h)^{\top}\mathbb{E}[g_{t}]
    }_{\textsc{Reg-Term}}
    + \underbrace{
    \sum_{t=1}^{T}(y_{t}-x_{t})^{\top}\theta_{t}
    }_{\textsc{Deviation-Term}}
    + \underbrace{
    \sum_{t=1}^{T}
    \big(d\sigma_{t}(y_{t})\mathbf{A}_{t}^{-1}\mu_{t}\big)^{\top}(x_{t}-h)
    }_{\textsc{Error-Term}},
\end{equation}
where $h\in\mathcal{K}$.

This decomposition differs from that of \cite{lee2020bias}, which expresses the regret as
\begin{equation}
    Regret = \underbrace{
    \sum_{t=1}^{T}(x_{t}-h)^{\top}g_{t}
    }_{\textsc{Reg-Term}}
    + \underbrace{
    \sum_{t=1}^{T}
    \big[y_{t}^{\top}\theta_{t}
    - x_{t}^{\top}g_{t}
    + h^{\top}(g_{t}-\theta_{t})\big]
    }_{\textsc{Deviation-Term}},
\end{equation}
where $h\in\mathcal{K}$.

As a consequence, for an oblivious adversary, our regret analysis does not require controlling the variance of the estimators $g_{t}$ and $\theta_{t}$, but only the deviation induced by the discrepancy between $y_{t}$ and $x_{t}$. This structural difference is a key factor that enables us to obtain an improved high-probability regret bound when $C=0$. In addition, when $C \neq 0$, an extra \textsc{Error-Term} must be accounted for in the analysis.

The bound on the \textsc{Reg-Term} is identical to that in previous work
\citep{abernethy2012interior, lee2020bias}, and we therefore omit the details
and focus on the \textsc{Error-Term}. By the Cauchy--Schwarz inequality, the
\textsc{Error-Term} can be bounded as
\(
\textsc{Error-Term}
\le
\| d\sigma_{t}(y_{t})\mathbf{A}_{t}^{-1}\mu_{t}
\|_{\nabla^2\mathcal{R}(x_{t})}^{*}
\,
\| x_{t}-h \|_{\nabla^2\mathcal{R}(x_{t})}
\),
and moreover
\(
\sum_{t=1}^{T}
\| d\sigma_{t}(y_{t})\mathbf{A}_{t}^{-1}\mu_{t}
\|_{\nabla^2\mathcal{R}(x_{t})}^{*}
\le dC
\).
The main difficulty thus lies in controlling
\(
\| x_{t}- h\|_{\nabla^2\mathcal{R}(x_{t})}
\),
since the largest eigenvalue of
\(
\nabla^2\mathcal{R}(x_{t})
\)
may diverge as \(x_t\) approaches the boundary of
\(\mathcal{K}\) \citep{nemirovski2004interior}, an issue that has not been
adequately addressed in previous work. Using lifting and increasing learning
rates, \cite{lee2020bias} derives the inequality
\(
\| h\|_{\nabla^2\mathcal{R}(x_{t})}
\le
-\| h\|_{\nabla^2\mathcal{R}(x_{t+1})}
+ \nu \ln (\nu T+1)
\),
but this bound is too loose to yield a meaningful bound of
\(
\| x_{t}- h\|_{\nabla^2\mathcal{R}(x_{t})}
\).
In contrast, by restricting the iterates \(x_t\) to the shrunken domain
\(\mathcal{K}_\delta \subset \mathcal{K}\), we obtain a key technical result
(Lemma~\ref{lemm:control-h}) establishing that
\(
\| x_t - h \|_{\nabla^2 \mathcal{R}(x_t)}
\le
2\left(\frac{1}{\delta}-1\right)(\nu + 2\sqrt{\nu})\) with 
\(
 0 < \delta \leq \frac{2}{3}
\).
This bound directly resolves the main difficulty in controlling the
\textsc{Error-Term} and, moreover, shows that lifting and increasing learning
rates are unnecessary in our analysis, as they are mainly used in
\cite{lee2020bias} to control
\(
\| h\|_{\nabla^2\mathcal{R}(x_{t})}
\).

Once we obtain the expected and high-probability bounds for the
\textsc{Deviation-Term}, the corresponding expected and high-probability regret
bounds follow immediately.
In the special case \(C=0\), our analysis of the high-probability bound differs
substantially from the SCRiBLe-based approach in \citep{lee2020bias}.
Their method 
imposes additional constraints on \(\delta\) to ensure that the iterates
\(x_t\) remain sufficiently far from the boundary of \(\mathcal{K}\), which is
needed to control the eigenvalues of \(\mathbf{A}_t\), in particular for bounding
\(\|h\|_{\nabla^2\mathcal{R}(x_t)}\).
In contrast, our approach does not require the iterates to stay away from the
boundary, nor does it rely on bounding the eigenvalues of \(\mathbf{A}_t\).
This allows greater flexibility in the choice of \(\delta\) (e.g.,
\(\delta = 1/T^2\)), which in turn leads to a tighter high-probability regret
bound.

Finally, we prove the lower bound of regret in section 6.


\section{Proof}
This section reviews preliminaries on the $\nu$-self-concordant barrier,
introduces several essential lemmas, and presents the proofs of the main theorems.


\subsection[nu-self-concordant barrier]{$\nu$-self-concordant barrier}

We introduce the $\nu$-self-concordant barrier, providing its definitions and highlighting several key properties that will be frequently used in the subsequent analysis.

\begin{definition}[\cite{nemirovski2004interior,nesterov1994interior}]
     Let $\mathcal{K}\subseteq\Real^{d}$ be a convex set with a nonempty interior $int(\mathcal{K})$. A function $\mathcal{R}:int(\mathcal{K})\to \Real$: is called a $\nu$-self-concordant barrier on $\mathcal{K}$ if 
    \begin{enumerate}
        \item $\mathcal{R}$ is three times continuously differentiable and convex and approaches infinity along any sequence of points approaching the boundary of $\mathcal{K}$. 
        \item For every $h\in\Real^{d}$ and $x\in int(\mathcal{K})$ the following holds: 
    \begin{equation}
        |\sum_{i=1}^{d}\sum_{j=1}^{d}\sum_{k=1}^{d}\frac{\partial^{3}\mathcal{R}(x)}{\partial x_{i}\partial x_{j}\partial x_{k}}h_{i}h_{j}h_{k}|\leq 2 \| h \|^{3}_{x},
    \end{equation}
    \begin{equation}
        |\nabla \mathcal{R}(x)^{\top}h |\leq \sqrt{\nu}\| h \|_{x},
    \end{equation}

    \end{enumerate}
\end{definition}

\begin{lemma}[\cite{nemirovski2004interior}]
\label{lemm:normal-property}
    If $\mathcal{R}$ is a $\nu$-self-concordant barrier on $\mathcal{K}$, then the Dikin ellipsoid centered at $x\in int(\mathcal{K})$, defined as $\{y: \| y-x\|_{x}\leq 1\}$, is always within $\mathcal{K}$. Moreover,
 \begin{equation}
     \| h\|_{y}\geq \| h\|_{x}(1-\| y-x\|_{x})
 \end{equation}
 holds for any $h\in\Real^{d}$ and any $y$ with $\| y-x\|_{x}\leq 1$.
\end{lemma}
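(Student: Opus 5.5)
The plan is to prove both claims by restricting $\mathcal{R}$ to the segment from $x$ to $y$. Along that segment the self-concordance inequality on the third derivative becomes a differential inequality for a scalar function, and integrating it gives the result. Throughout, fix $x\in int(\mathcal{K})$, set $u=y-x$, $r=\| u\|_x$, and $z_t=x+tu$. Note that only the first condition (the third-derivative bound) is used; the $\sqrt{\nu}$ condition plays no role.

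\emph{Step 1: the Dikin ellipsoid lies in $\mathcal{K}$.} Let $T^\ast=\sup\{s\ge 0: z_t\in int(\mathcal{K})\ \forall t\in[0,s]\}$, and on $[0,T^\ast)$ define $\psi(t)=\bigl(u^\top\nabla^2\mathcal{R}(z_t)u\bigr)^{-1/2}=\| u\|_{z_t}^{-1}$ (assume $u\neq 0$). Differentiating gives $\psi'(t)=-\tfrac12 \| u\|_{z_t}^{-3}\, D^3\mathcal{R}(z_t)[u,u,u]$. By the third-derivative inequality in the definition, $|\psi'(t)|\le 1$, so $\psi(t)\ge \psi(0)-t=1/r-t$. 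Hence
\[
\| u\|_{z_t}\le \frac{r}{1-tr}\quad\text{for } t<\min(T^\ast,1/r).
\]
Now suppose $T^\ast<1/r$. Then $\frac{d^2}{dt^2}\mathcal{R}(z_t)=\| u\|_{z_t}^2$ stays bounded on $[0,T^\ast)$. Integrating twice shows that $\mathcal{R}(z_t)$ stays bounded as $t\uparrow T^\ast$. But $z_{T^\ast}$ lies on the boundary, where $\mathcal{R}$ must diverge, a contradiction. Therefore $T^\ast\ge 1/r$. Consequently every $y$ with $r<1$ lies in $int(\mathcal{K})$, and the case $r=1$ follows because $\mathcal{K}$ is closed.

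\emph{Step 2: the norm comparison.} Let $r<1$ (the case $r=1$ is trivial, since the right-hand side is $0$). Set $\phi(t)=\| h\|_{z_t}^2$, so $\phi'(t)=D^3\mathcal{R}(z_t)[u,h,h]$. The key auxiliary fact is the mixed-direction bound
\[
|D^3\mathcal{R}(z)[u,h,h]|\le 2\| u\|_z\| h\|_z^2 .
\]
I would obtain it from the diagonal bound in the definition via the classical result that, for a symmetric trilinear form and a Euclidean norm, the supremum over mixed arguments equals the diagonal supremum (Nesterov--Nemirovski). With this bound and Step 1,
\[
\bigl|(\ln\phi)'(t)\bigr|\le 2\| u\|_{z_t}\le \frac{2r}{1-tr}.
\]
Integrating over $[0,1]$ gives $\ln\phi(1)\ge \ln\phi(0)+2\ln(1-r)$. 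Taking square roots yields $\| h\|_y\ge (1-r)\| h\|_x$.

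\emph{Main obstacle.} I expect the main obstacle to be justifying the mixed third-derivative inequality. If citing the polarization result for symmetric trilinear forms is not acceptable, I would first try to prove it directly: work in coordinates where $\nabla^2\mathcal{R}(z)=I$, and bound $\sup_{\|a\|=\|b\|=1}|D^3\mathcal{R}[a,b,b]|$ by the diagonal supremum using the standard maximization argument for symmetric multilinear forms on a Euclidean ball. A secondary technical point is the boundary-divergence argument in Step 1. It needs the convention that $\mathcal{R}\to\infty$ along any sequence tending to $\partial\mathcal{K}$, which is exactly what the definition provides.
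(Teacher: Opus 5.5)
Your proof is correct and is essentially the standard argument from the source the paper cites. The paper does not reprove this lemma; it only quotes Nemirovski's notes, where the Dikin-ellipsoid claim comes from the $1$-Lipschitz bound on $t\mapsto\| u\|_{z_t}^{-1}$ combined with the blow-up of $\mathcal{R}$ at $\partial\mathcal{K}$, and the norm comparison comes from integrating the mixed trilinear bound along the segment, exactly as you do. The one loose end is that you need $\| u\|_{z_t}>0$, for instance when $r=0$ with $u\neq 0$; this holds automatically here because $\mathcal{K}$ is bounded, so $\nabla^2\mathcal{R}$ is positive definite, and for $r>0$ your own Lipschitz bound $\| u\|_{z_t}\ge r/(1+tr)$ already keeps it away from $0$.
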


\begin{lemma}[\cite{hazan2016introduction}]
\label{lemm:normal-log-property}
        Let $\mathcal{R}$ is a $\nu$-self-concordant barrier over $\mathcal{K}$, then for all $x, z\in int(\mathcal{K}): \mathcal{R}(z)-\mathcal{R}(x)\leq \nu \ln \frac{1}{1-\pi_{x}(z)}$, where $\pi_{x}(z)=\inf\{t\geq 0 : x+t^{-1}(z-x)\in\mathcal{K}\}$.
\end{lemma}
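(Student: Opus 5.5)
The statement is Lemma~\ref{lemm:normal-log-property}: for all $x,z\in int(\mathcal{K})$,
\[
\mathcal{R}(z)-\mathcal{R}(x)\le \nu\ln\frac{1}{1-\pi_x(z)} .
\]
The plan is to reduce to one dimension along the segment from $x$ through $z$, and to integrate a differential inequality satisfied by the barrier restricted to that line.

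\textbf{Setup.} Set $\pi=\pi_x(z)$. Since $z\in int(\mathcal{K})$, we have $\pi<1$; if $\pi=0$ the ray from $x$ in direction $z-x$ never leaves $\mathcal{K}$. Define $\phi(s)=\mathcal{R}(x+s(z-x))$ on the interval $\{s\ge 0: x+s(z-x)\in int(\mathcal{K})\}$. By definition of $\pi$, this interval contains $[0,1/\pi)$, with the convention $1/0=\infty$. We want to show $\phi(1)-\phi(0)\le \nu\ln\frac{1}{1-\pi}$.

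\textbf{Key differential inequality.} Write $h=z-x$ and $y_s=x+sh$. The barrier condition $|\nabla\mathcal{R}(y)^\top h|\le\sqrt{\nu}\|h\|_y$ gives
\[
\phi'(s)^2\le \nu\,\phi''(s).
\]
Wherever $\phi'(s)>0$, this says $\frac{d}{ds}\bigl(-1/\phi'(s)\bigr)\ge 1/\nu$.

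\textbf{Integration.} If $\phi'(s)\le 0$ throughout $[0,1]$, then $\phi(1)\le\phi(0)$ and we are done. Otherwise, by convexity $\phi'$ is nondecreasing, so it suffices to treat $s\ge s_0$, where $s_0$ is the first point at which $\phi'>0$. Fix $s\in[s_0,1]$ with $\phi'(s)>0$ and integrate the inequality from $s$ to any $u<1/\pi$:
\[
\frac{1}{\phi'(s)}\ \ge\ \frac{1}{\phi'(s)}-\frac{1}{\phi'(u)}\ \ge\ \frac{u-s}{\nu}.
\]
Letting $u\to 1/\pi$ yields $\phi'(s)\le \nu/(1/\pi - s)$. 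When $\pi=0$, the same argument with $u\to\infty$ forces $\phi'\le 0$, which is the trivial case above.

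\textbf{Conclusion.} Integrate the bound on $\phi'$ over $[0,1]$, using $\phi'\le 0$ on the part where it is nonpositive:
\[
\phi(1)-\phi(0)\ \le\ \int_0^1\frac{\nu}{1/\pi-s}\,ds\ =\ \nu\ln\frac{1/\pi}{1/\pi-1}\ =\ \nu\ln\frac{1}{1-\pi}.
\]

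\textbf{Main obstacle.} The delicate step is justifying the limit $u\to 1/\pi$, that is, confirming that $\phi$ is defined and smooth on the full interval $[0,1/\pi)$. This follows because $x+u(z-x)$ lies in $int(\mathcal{K})$ for all $u<1/\pi$, which is a consequence of convexity of $\mathcal{K}$ together with $x$ being an interior point. A further care point is the sign case split on $\phi'$, which is handled by monotonicity of $\phi'$.
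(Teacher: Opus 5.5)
Your argument is correct. The paper gives no proof of its own (it imports the lemma from Hazan, who relies on Nemirovski's notes), and your proof is essentially the standard one: there one first proves $\nabla\mathcal{R}(w)^\top(y-w)\le\nu$ for $w\in int(\mathcal{K})$ and $y\in\mathcal{K}$ by exactly your integration of $\phi'^2\le\nu\phi''$, then applies it with $w=x+s(z-x)$ and $y=x+\pi^{-1}(z-x)$ to get $\phi'(s)\le\nu/(\pi^{-1}-s)$, and finally integrates over $[0,1]$, so you have simply inlined that first step.
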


\subsection{Useful lemmas}
In addition to the properties of the $\nu$-self-concordant barrier and its related lemmas, we further present several auxiliary lemmas.

\begin{lemma}
\label{lemm:control-h}
    Assume $\delta\leq \frac{2}{3}$. For Algorithm~\ref{alg1} and for any $x,y \in \mathcal{K_{\delta}}$, it holds that
    \begin{equation}
        \| y-x \|_{x}
        \leq 2(\frac{1}{\delta}-1)
        \left(\nu+2\sqrt{\nu}\right).
    \end{equation}
\end{lemma}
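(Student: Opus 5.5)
The plan is to reduce the statement to a classical property of $\nu$-self-concordant barriers: for any $x\in int(\mathcal{K})$ and $z\in\mathcal{K}$ with $\nabla\mathcal{R}(x)^\top(z-x)\ge 0$, one has $\|z-x\|_x\le \nu+2\sqrt{\nu}$ \citep{nesterov1994interior,nemirovski2004interior}. The consequence I will use is the following. If $h$ is such that both $x+h\in\mathcal{K}$ and $x-h\in\mathcal{K}$, then $\|h\|_x\le\nu+2\sqrt{\nu}$. Indeed, one of $\nabla\mathcal{R}(x)^\top h$ and $\nabla\mathcal{R}(x)^\top(-h)$ is nonnegative. Applying the fact to the corresponding point $x\pm h$ gives the bound, because $\|h\|_x=\|-h\|_x$. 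The factor $2(\frac1\delta-1)$ will then come from a scaling argument that uses that $x,y\in\mathcal{K}_\delta$ and that $\mathcal{K}$ is centrally symmetric.

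\textbf{Scaling step.} Write $x=(1-\delta)u$ and $y=(1-\delta)v$ with $u,v\in\mathcal{K}$. Set $s=\frac{\delta}{2(1-\delta)}$; the assumption $\delta\le\frac23$ guarantees $s\le1$. I claim that $x\pm s(y-x)\in\mathcal{K}$. We have $x\pm s(y-x)=(1\mp s)(1-\delta)u\pm s(1-\delta)v$. Using central symmetry, $\pm v\in\mathcal{K}$ and $u\in\mathcal{K}$. Since $1-s\ge0$, the point is a nonnegative combination of points of $\mathcal{K}$ with total weight at most $(1-\delta)(1+2s)=1$. Because $0\in\mathcal{K}$ (by symmetry and convexity), such a combination lies in $\mathcal{K}$. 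Hence $h=s(y-x)$ satisfies $x\pm h\in\mathcal{K}$. The consequence above then gives $s\|y-x\|_x\le\nu+2\sqrt\nu$, that is,
\begin{equation*}
\|y-x\|_x\le \frac{2(1-\delta)}{\delta}(\nu+2\sqrt\nu)=2\Big(\frac1\delta-1\Big)(\nu+2\sqrt\nu).
\end{equation*}

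\textbf{Main obstacle.} The only nontrivial ingredient is the bound $\|z-x\|_x\le\nu+2\sqrt\nu$ when $\nabla\mathcal{R}(x)^\top(z-x)\ge0$. If it cannot simply be cited, I would prove it as follows. Let $\phi(t)=\nabla\mathcal{R}(x+t(z-x))^\top(z-x)$ for $t\in[0,1)$. The barrier property $|\nabla\mathcal{R}(w)^\top k|\le\sqrt\nu\|k\|_w$, combined with $\phi'(t)=\|z-x\|^2_{x+t(z-x)}$, gives $\phi'(t)\ge\phi(t)^2/\nu$. Self-concordance, via the lower bound $\|k\|_{x+tk}\ge\|k\|_x/(1+t\|k\|_x)$, gives $\phi(t)\ge\phi(0)+\frac{t r^2}{1+tr}$ where $r=\|z-x\|_x$. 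Since $\phi(0)\ge0$, integrating the Riccati-type inequality forbids blow-up before $t=1$ unless $r\le\nu+2\sqrt\nu$. This is the standard Nesterov--Nemirovski computation. The remaining steps are elementary convexity checks, which I expect to go through directly.
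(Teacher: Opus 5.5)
Your proof is correct and follows essentially the same route as the paper. Like the paper, you combine the Nesterov--Nemirovski bound $\|z-x\|_x\le\nu+2\sqrt{\nu}$ (for $z\in\mathcal{K}$ with $\nabla\mathcal{R}(x)^\top(z-x)\ge 0$) with central symmetry of $\mathcal{K}$ and the shrinkage to $\mathcal{K}_\delta$, which place $x\pm\frac{\delta}{2(1-\delta)}(y-x)$ in $\mathcal{K}$ and give exactly the factor $2(\frac{1}{\delta}-1)$. As a side remark, $\frac{\delta}{2(1-\delta)}(y-x)=\frac{\delta}{2}(v-u)\in\delta\mathcal{K}$ and $x+\delta\mathcal{K}\subseteq\mathcal{K}$, so your scaling step actually works for every $\delta\in(0,1)$, and the hypothesis $\delta\le\frac{2}{3}$ is only a convenience.
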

\noindent
Lemma~\ref{lemm:control-h} plays a central role in our analysis, particularly in establishing Lemma~\ref{lemm:bound-regret}.

\begin{lemma}[\cite{abernethy2008competing}]
\label{lemm:close-property}
        $x_{t+1}\in W_{4d\eta}(x_{t})$, where $W_{r}(x)=\{y\in \mathcal{K}: \| y-x\|_{x}< r\}$.
\end{lemma}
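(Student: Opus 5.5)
The plan is to use the standard ``Newton decrement'' argument for self-concordant functions. I will phrase it as a contradiction along a segment, so that it also works for the constrained minimization over $\mathcal{K}_\delta$ in Algorithm~\ref{alg1}; there the first-order condition is only a variational inequality, not $\nabla\Phi=0$. Write
\[
\Phi_t(x)=\eta\sum_{\tau=1}^{t} g_\tau^\top x+\mathcal{R}(x).
\]
Then $x_t=\arg\min_{x\in\mathcal{K}_\delta}\Phi_{t-1}(x)$, $x_{t+1}=\arg\min_{x\in\mathcal{K}_\delta}\Phi_t(x)$, and $\Phi_t=\Phi_{t-1}+\eta g_t^\top x$. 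Since $\Phi_t$ differs from $\mathcal{R}$ by a linear term, it has the same Hessian and is itself self-concordant. Hence $\|\cdot\|_x$ is the local norm of $\Phi_t$ as well.

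\textbf{Step 1 (size of the perturbation).} Compute the dual local norm of the estimator:
\[
\bigl(\|g_t\|_{x_t}^{*}\bigr)^2=d^2 f_t(y_t)^2\,\mu_t^\top \mathbf{A}_t^{-1}\bigl(\nabla^2\mathcal{R}(x_t)\bigr)^{-1}\mathbf{A}_t^{-1}\mu_t=d^2 f_t(y_t)^2\|\mu_t\|^2\le d^2,
\]
using $\mathbf{A}_t=[\nabla^2\mathcal{R}(x_t)]^{-1/2}$, $\|\mu_t\|=1$ and $|f_t|\le 1$. So $\|\eta g_t\|_{x_t}^{*}\le \eta d$. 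I will assume $\eta d\le 1/4$, which holds for the stated $\eta$ once $T$ is moderately large; if needed I will add this as a standing condition.

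\textbf{Step 2 (lower bound on $\Phi_t$ at distance $r$).} Let $r=4d\eta$ and take any $z\in\mathcal{K}_\delta$ with $\|z-x_t\|_{x_t}=r$; then $r\le 1$, so $z$ lies in the Dikin ellipsoid (Lemma~\ref{lemm:normal-property}). The standard self-concordance lower bound gives
\[
\Phi_t(z)\ge \Phi_t(x_t)+\nabla\Phi_t(x_t)^\top(z-x_t)+r-\ln(1+r).
\]
The gradient splits as $\nabla\Phi_{t-1}(x_t)^\top(z-x_t)+\eta g_t^\top(z-x_t)$. The first part is $\ge 0$ by the first-order optimality of $x_t$ for $\Phi_{t-1}$ over the convex set $\mathcal{K}_\delta$. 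The second part is $\ge -\eta d\, r$ by Cauchy--Schwarz in the local norm and Step~1. Therefore
\[
\Phi_t(z)-\Phi_t(x_t)\ge r-\ln(1+r)-\tfrac{r^2}{4}\ge \tfrac{r^2}{2(1+r)}-\tfrac{r^2}{4}>0 \quad\text{for } r\le 1 .
\]

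\textbf{Step 3 (conclude by convexity).} Suppose $\|x_{t+1}-x_t\|_{x_t}\ge r$. The segment from $x_t$ to $x_{t+1}$ lies in $\mathcal{K}_\delta$ and contains a point $z$ with $\|z-x_t\|_{x_t}=r$, say $z=x_t+s(x_{t+1}-x_t)$ with $s\in(0,1]$. By convexity, $\Phi_t(z)\le (1-s)\Phi_t(x_t)+s\,\Phi_t(x_{t+1})\le \Phi_t(x_t)$, because $x_{t+1}$ minimizes $\Phi_t$. This contradicts Step~2. Hence $\|x_{t+1}-x_t\|_{x_t}<4d\eta$, and since $x_{t+1}\in\mathcal{K}_\delta\subseteq\mathcal{K}$, we get $x_{t+1}\in W_{4d\eta}(x_t)$.

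The main obstacle is Step~2. I need the correct form of the self-concordant lower bound, with $\omega(r)=r-\ln(1+r)$ valid for all $r$ as derived from the third-derivative condition. I also need to check that constraining to $\mathcal{K}_\delta$, instead of minimizing the barrier over its natural domain, causes no harm; the variational-inequality form of optimality is what handles this. A secondary point is making the requirement $\eta d\le 1/4$ explicit.
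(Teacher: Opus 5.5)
Your argument is correct, and it takes a different route from the one the paper relies on. That route is better suited to Algorithm~\ref{alg1}.

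The paper gives no proof of its own; it imports Lemma~6 of \cite{abernethy2008competing}. That proof is the Newton-decrement argument:
\begin{itemize}
\item $x_t$ is an unconstrained minimizer of $\Phi_{t-1}$ over $\mathrm{int}\,\mathcal{K}$, so $\nabla\Phi_{t-1}(x_t)=0$;
\item hence the Newton decrement of $\Phi_t$ at $x_t$ is $\|\nabla\Phi_t(x_t)\|^*_{x_t}=\eta\|g_t\|^*_{x_t}\le\eta d$;
\item the standard estimate of the distance to the minimizer of a self-concordant function, in terms of its Newton decrement, finishes the proof.
\end{itemize}
The first step uses stationarity, which Algorithm~\ref{alg1} does not provide. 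The argmin is taken over $\mathcal{K}_\delta\subset\mathrm{int}\,\mathcal{K}$, where $\mathcal{R}$ stays finite, so $x_t$ can lie on $\partial\mathcal{K}_\delta$ with a nonzero gradient.

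You replace stationarity by the variational inequality $\nabla\Phi_{t-1}(x_t)^\top(z-x_t)\ge 0$. You combine it with the global bound $\Phi_t(z)\ge\Phi_t(x_t)+\nabla\Phi_t(x_t)^\top(z-x_t)+\omega(\|z-x_t\|_{x_t})$ and the segment/convexity contradiction. This is what makes the lemma valid for the constrained update. The cited route is shorter, but it is only literally correct for SCRiBLe run on all of $\mathcal{K}$.

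Your proof also exposes a hypothesis the statement omits, namely $\eta d\le 1/4$. It holds for the paper's choice of $\eta$ as soon as $\nu\ln(1/\delta)\le T/4$.

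One small slip needs fixing. Since $\frac{r^2}{2(1+r)}-\frac{r^2}{4}=\frac{r^2(1-r)}{4(1+r)}$ vanishes at $r=1$, your chain only gives $\ge 0$ in the boundary case $\eta d=1/4$, and Step~3 needs strict positivity. Either assume $\eta d<1/4$, or note that $\frac{d}{dr}\bigl(r-\ln(1+r)-\frac{r^2}{4}\bigr)=\frac{r(1-r)}{2(1+r)}\ge 0$ on $[0,1]$, so $\omega(r)-r^2/4>0$ on all of $(0,1]$.

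The appeal to the Dikin ellipsoid in Step~2 is not needed. Since $\mathcal{K}$ contains the unit ball, $z\in\mathcal{K}_\delta\subset\mathrm{int}\,\mathcal{K}$ already, and the $\omega$-bound holds on the whole domain of $\Phi_t$.
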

\noindent
As in Lemma~6 of the SCRiBLe algorithm \citep{abernethy2008competing}, the next minimizer $x_{t+1}$ is “close” to $x_{t}$.
Specifically, Lemma~\ref{lemm:close-property} implies that
$\| x_{t+1}-x_t \|_{x_t} \le 4d\eta$.
This result will help us bound $g_{t}^{\top} (x_{t}-h)$, where $h\in\mathcal{K}$ (see inequality~\ref{eq:uselemmboundxt}).

\begin{lemma}
\label{lemm:gradient-estimate}
For Algorithm~\ref{alg1}, we have $\duoE_{t}[g_{t}]=\theta_{t}+\duoE_{t}[d\sigma_{t}(y_{t})\mathbf{A}_{t}^{-1}\mu_{t}]$.
\end{lemma}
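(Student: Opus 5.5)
We compute the conditional expectation directly from the definition of $g_t$ in Algorithm~\ref{alg1}. Conditioned on the history up to round $t-1$, the point $x_t$ and hence $\mathbf{A}_t=[\nabla^2\mathcal{R}(x_t)]^{-1/2}$ are fixed. The only randomness is $\mu_t$, drawn uniformly from $\mathbb{S}^d_1$. Since the adversary is oblivious, $\theta_t$ is also fixed. Writing $f_t(y_t)=\theta_t^\top y_t+\sigma_t(y_t)$ and using linearity of expectation, we split
\[
\duoE_t[g_t]=\duoE_t\bigl[d(\theta_t^\top y_t)\mathbf{A}_t^{-1}\mu_t\bigr]+\duoE_t\bigl[d\sigma_t(y_t)\mathbf{A}_t^{-1}\mu_t\bigr].
\]
The second term is exactly the correction appearing in the statement, so we leave it untouched. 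No structure of $\sigma_t$ is needed: it may depend adversarially on $y_t$, and the term is simply carried along. Its expectation exists because $|f_t|\le1$ on $\mathcal{K}$, and because $y_t\in\mathcal{K}$ by the Dikin ellipsoid property of Lemma~\ref{lemm:normal-property}, as $\|y_t-x_t\|_{x_t}=\|\mu_t\|=1$.

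It remains to show that the first term equals $\theta_t$. Substituting $y_t=x_t+\mathbf{A}_t\mu_t$ gives
\[
d(\theta_t^\top x_t)\mathbf{A}_t^{-1}\duoE_t[\mu_t]+d\,\mathbf{A}_t^{-1}\duoE_t[\mu_t\mu_t^\top]\mathbf{A}_t\theta_t.
\]
Here we used $\theta_t^\top\mathbf{A}_t\mu_t=\mu_t^\top\mathbf{A}_t\theta_t$, which holds because $\mathbf{A}_t$ is symmetric. By the symmetry of the uniform distribution on the sphere, $\duoE[\mu_t]=0$, so the first piece vanishes. For the second piece, $\duoE[\mu_t\mu_t^\top]=\frac{1}{d}I$: rotational invariance forces the matrix to be a multiple of $I$, and its trace is $\duoE\|\mu_t\|^2=1$. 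The second piece therefore reduces to $d\cdot\frac1d\mathbf{A}_t^{-1}\mathbf{A}_t\theta_t=\theta_t$.

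This is the standard unbiasedness argument of SCRiBLe \citep{abernethy2008competing}, so there is no real obstacle. The only points needing care are the following. First, $\mathbf{A}_t$ must be well defined and invertible, which holds because $x_t\in\mathcal{K}_\delta\subset int(\mathcal{K})$ and $\nabla^2\mathcal{R}(x_t)$ is positive definite. Second, $x_t$, $\mathbf{A}_t$ and $\theta_t$ must all be measurable with respect to the past, so they can be pulled out of $\duoE_t$. This uses obliviousness for $\theta_t$ and the update rule for $x_t$.
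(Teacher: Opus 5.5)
Your proof is correct and follows the paper's route: split $f_t(y_t)$ into its linear part and $\sigma_t(y_t)$, carry the perturbation term along unchanged, and use SCRiBLe's unbiasedness for the linear part. The only difference is that the paper cites Corollary 6.8 of \citep{hazan2016introduction} for the unbiasedness step, while you verify it directly from $\duoE_t[\mu_t]=0$ and $\duoE_t[\mu_t\mu_t^\top]=\frac{1}{d}I$.
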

\noindent
Lemma~\ref{lemm:gradient-estimate} is based on Corollary 6.8. \citep{hazan2016introduction}, but due to the differences in the loss functions, $g_{t}$ is no longer an unbiased estimate of $\theta_{t}$.
By applying the conclusion of Corollary 6.8. \citep{hazan2016introduction}, we derive Lemma~\ref{lemm:gradient-estimate}.

\begin{lemma}
\label{lemm:FTRL}
        For Algorithm~\ref{alg1} and for every $h\in \mathcal{K}$,
        $\sum_{t=1}^{T}g_{t}^{\top}x_{t}-\sum_{t=1}^{T}g_{t}^{\top}h \leq \sum_{t=1}^{T}[g_{t}^{\top}x_{t}-g_{t}^{\top}x_{t+1}]+\frac{1}{\eta}[\mathcal{R}(h)-\mathcal{R}(x_{1})]$.
\end{lemma}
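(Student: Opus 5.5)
This is the standard FTRL ``be-the-leader'' inequality, and I will prove it by induction on $T$. Set $\Phi_t(x)=\eta\sum_{\tau=1}^{t}g_\tau^\top x+\mathcal{R}(x)$. Algorithm~\ref{alg1} chooses $x_{t+1}=\arg\min_{x\in\mathcal{K}_\delta}\Phi_t(x)$, and $x_1=\arg\min_{x\in\mathcal{K}_\delta}\mathcal{R}(x)=\arg\min\Phi_0$. Subtracting $\sum_t g_t^\top x_t$ from both sides, the claim is equivalent to
\[
\sum_{t=1}^{T} g_t^\top x_{t+1}\le \sum_{t=1}^T g_t^\top h+\frac{1}{\eta}\bigl[\mathcal{R}(h)-\mathcal{R}(x_1)\bigr].
\]

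\textbf{Step 1 (induction for $h\in\mathcal{K}_\delta$).} I will show by induction on $T$ that for every $u\in\mathcal{K}_\delta$,
\[
\eta\sum_{t=1}^T g_t^\top x_{t+1}+\mathcal{R}(x_1)\le \Phi_T(u).
\]
For $T=0$ this reads $\mathcal{R}(x_1)\le\mathcal{R}(u)$, which holds because $x_1$ minimizes $\mathcal{R}$ over $\mathcal{K}_\delta$. For the inductive step, apply the hypothesis at $T-1$ with $u=x_{T+1}$ and add $\eta g_T^\top x_{T+1}$ to both sides. This gives
\[
\eta\sum_{t=1}^T g_t^\top x_{t+1}+\mathcal{R}(x_1)\le \Phi_{T-1}(x_{T+1})+\eta g_T^\top x_{T+1}=\Phi_T(x_{T+1}).
\]
Since $x_{T+1}$ minimizes $\Phi_T$ over $\mathcal{K}_\delta$, we have $\Phi_T(x_{T+1})\le\Phi_T(u)$ for all $u\in\mathcal{K}_\delta$, which completes the induction. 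Dividing by $\eta$ and rearranging yields the claim for every $h\in\mathcal{K}_\delta$.

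\textbf{Step 2 (extension to $h\in\mathcal{K}$).} This is the one subtle point, since the minimization in Algorithm~\ref{alg1} runs over $\mathcal{K}_\delta$ while the lemma quantifies over $h\in\mathcal{K}$. I will handle it as follows.
\begin{itemize}
\item If $h$ lies on the boundary of $\mathcal{K}$, then $\mathcal{R}(h)=+\infty$ and the inequality holds trivially.
\item If $h\in\operatorname{int}(\mathcal{K})\setminus\mathcal{K}_\delta$, the argmin is taken over $\mathcal{K}_\delta$, so the induction above does not directly cover $h$.
\end{itemize}
For the second case, my first attempt is to read the lemma as intended for $h\in\mathcal{K}_\delta$, as in the standard SCRiBLe analysis, where a general comparator is later projected to $(1-\delta)h\in\mathcal{K}_\delta$. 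That projection costs $\delta GDT$, which already appears in Theorems~\ref{Theo:expected-regret} and~\ref{Theo:high-probability-regret}. I will therefore state the proof for $h\in\mathcal{K}_\delta$ and note that this is the form used downstream.

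Beyond this domain issue, the only ingredient is the optimality of each $x_{t+1}$ over the common feasible set $\mathcal{K}_\delta$. No convexity or self-concordance is needed here; those properties enter only when bounding $g_t^\top(x_t-x_{t+1})$ via Lemma~\ref{lemm:close-property}.
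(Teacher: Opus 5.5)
Your induction is correct and is essentially the paper's argument: the paper applies the FTRL/be-the-leader bound (Lemma~5.3 in Hazan's monograph) with $\mathcal{K}$ replaced by $\mathcal{K}_\delta$, and your induction is exactly the proof of that bound. Restricting to $h\in\mathcal{K}_\delta$ is not a gap relative to the paper---it is all the cited bound delivers, the literal ``$h\in\mathcal{K}$'' can in fact fail for interior $h\notin\mathcal{K}_\delta$ (e.g.\ when the iterates sit on the boundary of $\mathcal{K}_\delta$ for many rounds), and the comparator needed downstream lies in $\mathcal{K}_\delta$, as the $\delta GDT$ and $\nu\ln(1/\delta)/\eta$ terms indicate.
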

\noindent
The regret bound of FTRL algorithm \citep{hazan2016introduction} states that for every $h\in \mathcal{K}$, $\sum_{t=1}^{T}\nabla_{t}^{\top}x_{t}-\sum_{t=1}^{T}\nabla_{t}^{\top}h \leq \sum_{t=1}^{T}[\nabla_{t}^{\top}x_{t}-\nabla_{t}^{\top}x_{t+1}]+\frac{1}{\eta}[\mathcal{R}(h)-\mathcal{R}(x_{1})]$, where $\nabla_{t}$ represents the gradient of the loss function $f_t$. In our adaptation, we replaced $\nabla_{t}$ with $g_{t}$ and $\mathcal{K}$ with 
$\mathcal{K}_{\delta}$. This modification does not fundamentally alter the original result.
Since the update way $x_{t+1}=\arg\min\limits_{x\in\mathcal{K}_{\delta}} {\eta\sum_{\tau=1}^{t}g_{\tau}^{\top}x+\mathcal{R}(x)}$
satisfied the condition of FTRL algorithm \citep{hazan2016introduction}, we can apply Lemma 5.3. in \cite{hazan2016introduction} to Algorithm~\ref{alg1} as follow.

The lemmas presented above collectively lead to Lemma~\ref{lemm:bound-regret}, which constitutes a key technical result of this paper. In particular, it establishes a bound on $\sum_{t=1}^{T}\theta_{t}^\top x_{t}-\sum_{t=1}^{T}\theta_{t}^\top x^{*}$, where $x^{*}=\arg\min\limits_{x\in \mathcal{K}}\sum_{t=1}^{T}f_{t}(x)$. As a direct consequence of Lemma~\ref{lemm:bound-regret}, it suffices to bound the term
$\sum_{t=1}^{T}\theta_{t}^\top (y_{t}-x_{t})$ in order to bound the regret.
Lemma~\ref{lemm:bound-regret} plays a crucial role in deriving both expected and high-probability regret bounds.

\begin{lemma}
\label{lemm:bound-regret}
    For Algorithm~\ref{alg1}, let $f_{t}(x_t)=\theta_{t}^{\top}x_{t}+\sigma_{t}(x_{t})$ and $x^{*}=\arg\min\limits_{x\in \mathcal{K}}\sum_{t=1}^{T}f_{t}(x)$ and we have
    \begin{equation}
         \sum_{t=1}^{T}\theta_{t}^\top x_{t}-\sum_{t=1}^{T}\theta_{t}^\top x^{*}
                \leq 4\eta d^{2}T+\frac{\nu \ln (\frac{1}{\delta})}{\eta}+ 2Cd(\nu +2\sqrt{\nu })(\frac{1-\delta}{\delta}) + \delta DGT .
    \end{equation}
\end{lemma}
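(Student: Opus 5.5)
We compare against a point in the shrunk set. Let $x^{*}_{\delta}=(1-\delta)x^{*}\in\mathcal{K}_{\delta}$ and split
\[
\sum_{t}\theta_t^\top(x_t-x^{*})=\sum_{t}\theta_t^\top(x_t-x^{*}_\delta)+\sum_t\theta_t^\top(x^{*}_\delta-x^{*}).
\]
The second sum equals $-\delta\sum_t\theta_t^\top x^{*}$. Its absolute value is at most $\delta\sum_t\|\theta_t\|\|x^{*}\|\le \delta GDT$, using $\|\theta_t\|\le G$ and $\|x^{*}\|\le D$ (central symmetry puts $0$ in $\mathcal{K}$, so the diameter bound gives $\|x^*\|\le D$). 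This produces the term $\delta DGT$.

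For the first sum, set $h=x^{*}_\delta$. By Lemma~\ref{lemm:gradient-estimate}, $\theta_t=\mathbb{E}_t[g_t]-\mathbb{E}_t[d\sigma_t(y_t)\mathbf{A}_t^{-1}\mu_t]$. Because the adversary is oblivious and $x_t$ is determined by the history, we can write
\[
\sum_t\theta_t^\top(x_t-h)=\sum_t\mathbb{E}_t[g_t]^\top(x_t-h)-\sum_t\mathbb{E}_t\big[(d\sigma_t(y_t)\mathbf{A}_t^{-1}\mu_t)^\top(x_t-h)\big].
\]
This is the Reg-Term plus the Error-Term of the decomposition given earlier. The statement is a bound that holds pathwise in $x_t$ but involves expectations over $\mu_t$. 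I would interpret it either as holding in expectation, or as holding after the martingale differences $(g_t-\mathbb{E}_t g_t)^\top(x_t-h)$ are handled separately. I would state the proof at the level of $\mathbb{E}_t$ and take total expectations, so that Lemma~\ref{lemm:FTRL} can be applied to $\sum_t g_t^\top(x_t-h)$.

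\textbf{Reg-Term.} Lemma~\ref{lemm:FTRL} (valid on $\mathcal{K}_\delta$ with $h\in\mathcal{K}_\delta$) gives
\[
\sum_t g_t^\top(x_t-h)\le\sum_t g_t^\top(x_t-x_{t+1})+\tfrac1\eta[\mathcal{R}(h)-\mathcal{R}(x_1)].
\]
For each stability term, use the dual-norm Cauchy--Schwarz inequality: $g_t^\top(x_t-x_{t+1})\le\|g_t\|_{x_t}^{*}\|x_t-x_{t+1}\|_{x_t}$. We have $\|g_t\|_{x_t}^{*}=d|f_t(y_t)|\,\|\mathbf{A}_t^{-1}\mu_t\|^{*}_{x_t}=d|f_t(y_t)|\le d$, since $(\mathbf{A}_t^{-1}\mu_t)^\top\nabla^2\mathcal{R}(x_t)^{-1}\mathbf{A}_t^{-1}\mu_t=\|\mu_t\|^2=1$ and $|f_t|\le1$. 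Lemma~\ref{lemm:close-property} gives $\|x_t-x_{t+1}\|_{x_t}\le4d\eta$. Together each term is at most $4\eta d^2$, so the stability sum is at most $4\eta d^2T$.

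For the penalty term, $x_1$ minimizes $\mathcal{R}$ over $\mathcal{K}_\delta$. I would apply Lemma~\ref{lemm:normal-log-property} with the minimizer of $\mathcal{R}$ on $\mathcal{K}$ (or argue that the two coincide). Since $h=(1-\delta)x^{*}$, the Minkowski function satisfies $\pi_{x_1}(h)\le1-\delta$ up to the centering. That gives $\mathcal{R}(h)-\mathcal{R}(x_1)\le\nu\ln\frac1\delta$, and hence the term $\nu\ln(1/\delta)/\eta$. The centering is the delicate point: if $x_1\neq0$, I would use the convexity of $\mathcal{R}$ on $\mathcal{K}_\delta$ and the fact that, by symmetry, the analytic center is $0$ when $\mathcal{R}$ is symmetric.

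\textbf{Error-Term.} This is the main obstacle, and here Lemma~\ref{lemm:control-h} is essential. Both $x_t$ and $h$ lie in $\mathcal{K}_\delta$, and $\delta\le2/3$ holds under the standing assumption $C/T\le2/3$ with $\delta=C/T$. Lemma~\ref{lemm:control-h} therefore gives $\|x_t-h\|_{x_t}\le2(\frac1\delta-1)(\nu+2\sqrt\nu)$. Combining this with $\|d\sigma_t(y_t)\mathbf{A}_t^{-1}\mu_t\|^{*}_{x_t}=d|\sigma_t(y_t)|$, the absolute value of each error term is at most $d|\sigma_t(y_t)|\cdot2\frac{1-\delta}{\delta}(\nu+2\sqrt\nu)$.

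Summing over $t$ and using $\sum_t|\sigma_t(y_t)|\le C$, which holds for every realization of $y_1,\dots,y_T$ by Definition~\ref{def1}, yields $2Cd(\nu+2\sqrt\nu)\frac{1-\delta}\delta$. Adding the four contributions gives the claimed bound.
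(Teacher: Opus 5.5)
Your argument follows the paper's own route. You use the comparator $h=(1-\delta)x^*\in\mathcal{K}_\delta$ to produce the $\delta DGT$ term, and Lemma~\ref{lemm:gradient-estimate} to split $\sum_t\theta_t^\top(x_t-h)$ into the Reg-Term and the Error-Term. You then apply Lemma~\ref{lemm:FTRL} with the stability bound $\|g_t\|^*_{x_t}\|x_t-x_{t+1}\|_{x_t}\le 4\eta d^2$ from Lemma~\ref{lemm:close-property}, Lemma~\ref{lemm:normal-log-property} for the penalty, and Lemma~\ref{lemm:control-h} with $\|d\sigma_t(y_t)\mathbf{A}_t^{-1}\mu_t\|^*_{x_t}=d|\sigma_t(y_t)|$ for the Error-Term.

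Restricting to expectation (or carrying the martingale separately) is not a weakness of your write-up; it is forced. Passing from $\sum_t\mathbb{E}_t[g_t]^\top(x_t-h)$ to the FTRL quantity $\sum_t g_t^\top(x_t-h)$ drops a martingale whose increments can be as large as $d\|x_t-h\|_{x_t}$, and the inequality genuinely fails pathwise. Take $d=1$, $\mathcal{K}=[-1,1]$, $\mathcal{R}(x)=-\ln(1-x)-\ln(1+x)$ ($\nu=2$), $\theta_t\equiv1$, $\sigma_t\equiv0$, with the paper's $\eta=\sqrt{\ln T/T}$ and $\delta=T^{-2}$. On the event $\mu_1=\dots=\mu_T=+1$ (probability $2^{-T}$) one has $g_t=y_t/\mathbf{A}_t$ with $y_t=x_t+\mathbf{A}_t$. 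The iterates therefore decrease monotonically from $x_1=0$ toward $x_e=-\sqrt{\sqrt{5}-2}\approx-0.486$, the point where $y_t=0$, and never pass it. Hence $\sum_t\theta_t^\top(x_t-x^*)=\sum_t(1+x_t)\ge 0.51\,T$, while the right-hand side is $8\sqrt{T\ln T}+2/T$, which is smaller for large $T$. So the lemma is an in-expectation statement. That suffices for Theorem~\ref{Theo:expected-regret}, but not for the pathwise use made of it in the proof of Theorem~\ref{Theo:high-probability-regret}.

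The one real loose end in your proof is the penalty term. From $\mathcal{R}(x_1)\ge\mathcal{R}(x_c)$, with $x_c$ the minimizer of $\mathcal{R}$ on $\mathcal{K}$, you get $\mathcal{R}(h)-\mathcal{R}(x_1)\le\nu\ln\frac{1}{1-\pi_{x_c}(h)}$. But $\pi_{x_c}(h)\le1-\delta$ needs $x_c=0$, and convexity of $\mathcal{R}$ on $\mathcal{K}_\delta$ does not provide that. You have two ways to close it:
\begin{itemize}
\item Make $\mathcal{R}$ even by replacing it with $\mathcal{R}(x)+\mathcal{R}(-x)$, a $2\nu$-self-concordant barrier on the symmetric $\mathcal{K}$ whose unique minimizer is $0$.
\item Note that for every $x\in\mathcal{K}$ the point $x+(1+\frac{\delta}{2})(h-x)=(1-\delta)\frac{h}{1-\delta}+\delta\frac{h-x}{2}$ lies in $\mathcal{K}$. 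This gives $\pi_{x_c}(h)\le\frac{2}{2+\delta}$ and a penalty of $\nu\ln\frac{3}{\delta}$ instead of $\nu\ln\frac{1}{\delta}$.
\end{itemize}

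There are also two smaller points.
\begin{itemize}
\item The per-round Error-Term bound is $\mathbb{E}_t[d|\sigma_t(y_t)|]\cdot2\frac{1-\delta}{\delta}(\nu+2\sqrt{\nu})$, not $d|\sigma_t(y_t)|$ times that factor. It sums to the claimed term after total expectation. It also sums correctly pathwise when the $f_t$ are fixed in advance, since the supremum in Definition~\ref{def1} then equals $\sum_t\sup_x|\sigma_t(x)|$.
\item If the perturbations react to $y_t$, then $x^*$ is random. Compare instead with $h=(1-\delta)u^\star$, where $u^\star\in\arg\min_{u\in\mathcal{K}}\sum_t\theta_t^\top u$ is deterministic and satisfies $\sum_t\theta_t^\top u^\star\le\sum_t\theta_t^\top x^*$. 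This keeps the exchange of $\mathbb{E}_t$ and the inner product legitimate.
\end{itemize}
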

    
\noindent
With the help of Lemma~\ref{lemm:bound-regret}, we are ready to present the proof of Theorem~\ref{Theo:expected-regret}.

\subsection{Proof of Theorem~\ref{Theo:expected-regret}}

\begin{proof} 
    Recall that each loss function can be written as $f(x)=\theta^\top x+\sigma(x)$. 
    We decompose the expected regret as
    \begin{align}
    \mathbb{E}\!\left[
    \sum_{t=1}^{T} f_t(y_t) - \sum_{t=1}^{T} f_t(x^*)
    \right]
    &=
    \mathbb{E}\!\left[
    \sum_{t=1}^{T} \bigl( \theta_t^\top y_t + \sigma_t(y_t) \bigr)
    -
    \sum_{t=1}^{T} \bigl( \theta_t^\top x^* + \sigma_t(x^*) \bigr)
    \right]
    \nonumber\\
    &=
    \mathbb{E}\!\left[
    \sum_{t=1}^{T} \theta_t^\top y_t
    -
    \sum_{t=1}^{T} \theta_t^\top x^*
    \right]
    +
    \mathbb{E}\!\left[
    \sum_{t=1}^{T} \sigma_t(y_t)
    -
    \sum_{t=1}^{T} \sigma_t(x^*)
    \right].
    \end{align}

    We first bound the linear component. Observe that
    \begin{align}
    \mathbb{E}\!\left[
    \sum_{t=1}^{T} \theta_t^\top y_t
    -
    \sum_{t=1}^{T} \theta_t^\top x^*
    \right]
    &=
    \sum_{t=1}^{T} \mathbb{E}[\theta_t^\top y_t]
    -
    \sum_{t=1}^{T} \mathbb{E}[\theta_t^\top x_t]
    +
    \sum_{t=1}^{T} \mathbb{E}[\theta_t^\top x_t]
    -
    \sum_{t=1}^{T} \mathbb{E}[\theta_t^\top x^*].
    \end{align}

    By the law of total expectation, we have
    \begin{align}
    \sum_{t=1}^{T} \mathbb{E}[\theta_t^\top y_t]
    -
    \sum_{t=1}^{T} \mathbb{E}[\theta_t^\top x_t]
    &=
    \sum_{t=1}^{T} \mathbb{E}\!\left[ \theta_t^\top (y_t - x_t) \right]
    \nonumber\\
    &=
    \sum_{t=1}^{T} \mathbb{E}\!\left[
    \mathbb{E}_t \bigl[ \theta_t^\top (y_t - x_t) \bigr]
    \right]
    \nonumber\\
    &=
    \sum_{t=1}^{T} \mathbb{E}\!\left[
    \theta_t^\top \mathbb{E}_t [ \mathbf{A}_t \mu_t ]
    \right]
    \nonumber\\
    &=
    \sum_{t=1}^{T} \mathbb{E}[ \theta_t^\top \mathbf{0} ]
    = 0.
    \end{align}
    
   Therefore,
    \begin{equation}
    \mathbb{E}\!\left[
    \sum_{t=1}^{T} \theta_t^\top y_t
    -
    \sum_{t=1}^{T} \theta_t^\top x^*
    \right]
    =
    \mathbb{E}\!\left[
    \sum_{t=1}^{T} \theta_t^\top x_t
    -
    \sum_{t=1}^{T} \theta_t^\top x^*
    \right].
    \end{equation}

   From Lemma~\ref{lemm:bound-regret}, we obtain
    \begin{align}
    \mathbb{E}\!\left[
    \sum_{t=1}^{T} \theta_t^\top x_t
    -
    \sum_{t=1}^{T} \theta_t^\top x^*
    \right]
    &\le
    4 \eta d^{2} T
    +
    \frac{\nu \ln \!\left( \tfrac{1}{\delta} \right)}{\eta}
    +
    2 C d (\nu + 2 \sqrt{\nu}) \frac{1-\delta}{\delta}
    +
    \delta D G T.
    \end{align} 
\noindent  
Since the perturbation $\sigma_t$ is chosen after observing the player's action,
it may induce a worst-case deviation.
By Definition~\ref{def1}, it follows that
\begin{equation}
\mathbb{E}\!\left[
\sum_{t=1}^{T} \sigma_t(y_t)
-
\sum_{t=1}^{T} \sigma_t(x^*)
\right]
\le 2C .
\label{eq:perturbation-bound}
\end{equation}

Combining this bound with Lemma~\ref{lemm:bound-regret}, we obtain
\begin{align}
\textsc{Regret}
&=
\mathbb{E}\!\left[
\sum_{t=1}^{T} f_t(y_t)
-
\sum_{t=1}^{T} f_t(x^*)
\right]
\nonumber\\
&\le
4 d \sqrt{\nu T \ln \tfrac{1}{\delta}}
+
2 C d (\nu + 2 \sqrt{\nu}) \frac{1-\delta}{\delta}
+
\delta G D T
+
2 C,
\end{align}
where
\(
\eta = \frac{\sqrt{\nu \ln \tfrac{1}{\delta}}}{2 d \sqrt{T}}
\).
\end{proof}

\subsection{Proof of Theorem~\ref{Theo:high-probability-regret}}

To establish the high-probability regret bound, we first introduce the necessary Lemma~\ref{lemm:martingale}.

\begin{lemma}[Theorem 2.2. in~\cite{lee2020bias}]
\label{lemm:martingale}
    Let $X_{1},...,X_{T}$ be a martingale difference sequence with respect to a filtration $F_{1}\subseteq...\subseteq F_{T}$ such that $\duoE[X_{t}\mid F_{t}]=0$. Suppose $B_{t}\in[1, b]$ for a fixed constant $b$ is $F_{t}$-measurable and such that $X_{t}\leq B_{t}$ holds almost surely. Then with probability at least $1-\gamma$
we have
\begin{equation}
    \sum_{t=1}^{T}X_{t}\leq S(\sqrt{8V\ln(S/\gamma)}+2B^{*}\ln(S/\gamma)),
\end{equation}
where $V=\max \{1; \sum_{t=1}^{T}\duoE[X_{t}^{2}\mid F_{t}]\}, B^{*}=\max\limits_{t\in[T]}B_{t}$, and $S=\lceil \log b \rceil\lceil \log(b^{2}T) \rceil$.
\end{lemma}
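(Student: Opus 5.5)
The plan is a Freedman-type exponential supermartingale bound at a fixed scale, followed by a peeling (union-bound) argument over a geometric grid of candidate values for the random quantities $B^{*}$ and $V$. The grid has at most $S$ cells, which is why $S$ appears both inside the logarithm and as a multiplicative slack factor.

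\textbf{Step 1 (fixed scales).} Fix a deterministic level $B\in[1,b]$ and a parameter $\lambda\in(0,1/B]$. Define the truncated increments $\tilde X_t = X_t\mathbf{1}\{B_t\le B\}$.
\begin{itemize}
\item Because $B_t$ is $F_t$-measurable and $\mathbb{E}[X_t\mid F_t]=0$, the $\tilde X_t$ are still martingale differences.
\item They satisfy $\tilde X_t\le B$ almost surely.
\item Since $\lambda\tilde X_t\le 1$, the elementary inequality $e^{u}\le 1+u+u^{2}$ for $u\le 1$ gives
\[
\mathbb{E}\bigl[e^{\lambda\tilde X_t-\lambda^{2}\mathbb{E}[\tilde X_t^{2}\mid F_t]}\mid F_t\bigr]\le 1 .
\]
\end{itemize}
So $\exp\bigl(\lambda\sum_t\tilde X_t-\lambda^{2}\sum_t\mathbb{E}[\tilde X_t^{2}\mid F_t]\bigr)$ is a supermartingale. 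Markov's inequality then yields, with probability at least $1-\gamma'$,
\[
\sum_t\tilde X_t\le \lambda\sum_t\mathbb{E}[X_t^{2}\mid F_t]+\frac{\ln(1/\gamma')}{\lambda}.
\]

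\textbf{Step 2 (grid and union bound).} Take the grid $B^{(i)}=e^{i}$ for $i=1,\dots,\lceil\ln b\rceil$ and $V^{(j)}=e^{j}$ for $j=1,\dots,\lceil\ln(b^{2}T)\rceil$. This covers every possible value, because $1\le V\le \max\{1,\,b^{2}T\}$. There are at most $S$ pairs $(i,j)$.
\begin{itemize}
\item For each pair $(i,j)$, apply Step 1 with $B=B^{(i)}$, $\gamma'=\gamma/S$, and
\[
\lambda_{ij}=\min\Bigl\{\frac{1}{B^{(i)}},\ \sqrt{\ln(S/\gamma)/V^{(j)}}\Bigr\}.
\]
\item A union bound makes all $S$ inequalities hold simultaneously with probability at least $1-\gamma$.
\end{itemize}

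\textbf{Step 3 (select the right cell).} On this event, pick the smallest $i$ with $B^{*}\le B^{(i)}$ and the smallest $j$ with $V\le V^{(j)}$. Then $B^{(i)}\le eB^{*}$ and $V^{(j)}\le eV$. Also no truncation occurs, so $\tilde X_t=X_t$ for all $t$. Plugging $\lambda_{ij}$ into Step 1 and splitting into the two cases of the minimum gives a bound of the form
\[
c_1\sqrt{V\ln(S/\gamma)}+c_2B^{*}\ln(S/\gamma)
\]
with small absolute constants $c_1,c_2$. The stated form $S(\sqrt{8V\ln(S/\gamma)}+2B^{*}\ln(S/\gamma))$ then follows by absorbing these constants: since $S\ge1$, there is ample room.

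\textbf{Main obstacle.} The delicate point is that $V$ and $B^{*}$ are random, so no single $\lambda$ can be tuned in advance. This is handled in two ways:
\begin{itemize}
\item The predictable truncation by $\mathbf{1}\{B_t\le B\}$ deals with the random ranges $B_t$. It is essential that $B_t$ is $F_t$-measurable, otherwise the truncated sequence would not be a martingale difference sequence.
\item The geometric peeling over $(B,V)$ deals with the unknown scale, at a cost of only the factor $e$ in each coordinate and $\ln S$ in the confidence level.
\end{itemize}
I would check the constants in the case $\lambda_{ij}=1/B^{(i)}$ carefully. In that case $\sqrt{\ln(S/\gamma)/V^{(j)}}\ge 1/B^{(i)}$, so the variance term $\lambda V$ is dominated by $B^{(i)}\ln(S/\gamma)$, which is where the $2B^{*}\ln(S/\gamma)$ term comes from.
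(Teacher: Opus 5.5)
\textbf{Verdict: there is a genuine gap.} It is the bound on $V$ in Step~2. Your proof goes through under a two-sided bound $|X_t|\le b$, which holds in the paper's application, but not for the one-sided lemma as stated.

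\textbf{The gap.} The paper gives no proof of this lemma. It imports it as Theorem~2.2 of Lee et al.\ (2020) and applies it only with $|X_t|\le GD$, so your argument has to stand on its own. Most of it does. The one-step bound from $e^{u}\le 1+u+u^{2}$ for $u\le 1$, the predictable truncation $\tilde X_t=X_t\mathbf{1}\{B_t\le B\}$, the union bound over the $(B,V)$ grid, and the a posteriori choice of cell are all correct. What fails is the claim in Step~2 that $1\le V\le\max\{1,b^{2}T\}$. The lemma assumes only the one-sided bound $X_t\le B_t$. Nothing bounds $X_t$ from below, so $\mathbb{E}[X_t^{2}\mid F_t]$ can be arbitrarily large. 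For example, take $B_t\equiv 1$ and $X_t$ i.i.d., equal to $1$ with probability $1-p$ and to $-(1-p)/p$ with probability $p$. Then $V=T(1-p)/p>b^{2}T$ as soon as $p<1/(1+b^{2})$. On the event $V>e^{\lceil\ln(b^{2}T)\rceil}$ there is no cell to select in Step~3, and the inequalities you have do not cover this case. Write $L=\ln(S/\gamma)$ and $V=r\,b^{2}T$ with $r>1$. The cell with the smallest $\lambda\approx\sqrt{L/(b^{2}T)}$ yields roughly $\sqrt{VL}\,(r+1)/\sqrt{r}$. The trivial bound $\sum_t X_t\le bT$ equals $\sqrt{VL}\,\sqrt{T/(rL)}$. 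At $r\approx\sqrt{T/L}$ both exceed $\sqrt{VL}$ by a factor of order $(T/L)^{1/4}$. The outer factor $S$ is only logarithmic in $T$, so it cannot absorb this.

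\textbf{How to repair it.} Your proof is complete under the two-sided hypothesis $|X_t|\le b$. That is what the variance claim actually uses, and it holds in the paper's application, where $|X_t|\le GD$ gives $V\le G^{2}D^{2}T$. For the one-sided statement you need an extra step. One option is to extend the $V$-grid up to $b^{2}T^{2}$; beyond that, $\sum_t X_t\le bT\le\sqrt{V}$ already suffices. Since $\lceil\ln(b^{2}T^{2})\rceil\le 2\lceil\ln(b^{2}T)\rceil$, this at most doubles the number of cells. The resulting $\ln(2S/\gamma)\le 2\ln(S/\gamma)$ can then be absorbed into the outer factor $S$ when $S$ is not too small.

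\textbf{A smaller point on constants.} The step ``since $S\ge 1$ there is ample room'' is not right as written. With your base-$e$ grid, the first case gives $2\sqrt{V^{(j)}L}\le 2\sqrt{e}\,\sqrt{VL}$, and $2\sqrt{e}>\sqrt{8}$. The second case gives at best $\sqrt{eVL}+eB^{*}L$, and $e>2$. So the absorption needs $S\ge 2$ and breaks down when $S=1$, for instance $T=1$ and $1<b\le\sqrt{e}$. A dyadic grid fixes this: together with $\lambda V\le V^{(j)}/B^{(i)}<\sqrt{V^{(j)}L}$ in the second case, it gives exactly $\sqrt{8VL}+2B^{*}L$. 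That is where the stated constants come from, with the logarithms in $S$ read as base $2$.
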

The analysis in~\cite{lee2020bias} applies Lemma~\ref{lemm:martingale} to obtain a
high-probability bound on
\(
\sum_{t=1}^{T}\bigl[y_{t}^{\top}\theta_{t}-x_{t}^{\top}g_{t}+u^{\top}(g_{t}-\theta_{t})\bigr].
\)
By contrast, we define $X_t=\theta_{t}^{\top}y_{t}-\theta_{t}^{\top}x_{t}$ and directly derive a
high-probability bound on
\(
\sum_{t=1}^{T}\bigl(\theta_{t}^{\top}y_{t}-\theta_{t}^{\top}x_{t}\bigr).
\)
This alternative application of Lemma~\ref{lemm:martingale} leads to a tighter
high-probability upper bound for bandit linear optimization.

With the support of Lemma~\ref{lemm:bound-regret} and Lemma~\ref{lemm:martingale}, we are ready to prove Theorem~\ref{Theo:high-probability-regret}.

\begin{proof}
Define
\(
X_t = \theta_t^\top y_t - \theta_t^\top x_t
\).
By construction and the symmetry of the random perturbation, we have
\(
\mathbb{E}_t[X_t] = 0
\).
Moreover,
\(
|X_t|
\le \|\theta_t\| \, \|y_t - x_t\|
\le G D
\),
and hence
\begin{align}
\mathbb{E}_t[X_t^2]
&=
\mathbb{E}_t\!\left[
(\theta_t^\top y_t - \theta_t^\top x_t)^2
\right]
\le
G^2 D^2 .
\label{eq:xt-variance}
\end{align}

We next bound the regret. By the decomposition
\( f_t(x) = \theta_t^\top x + \sigma_t(x) \), we obtain
\begin{align}
\sum_{t=1}^{T} f_t(y_t) - \sum_{t=1}^{T} f_t(x^*)
&=
\sum_{t=1}^{T} \theta_t^\top y_t
-
\sum_{t=1}^{T} \theta_t^\top x^*
+
\sum_{t=1}^{T} \sigma_t(y_t)
-
\sum_{t=1}^{T} \sigma_t(x^*)
\nonumber\\
&\le
\sum_{t=1}^{T} \theta_t^\top y_t
-
\sum_{t=1}^{T} \theta_t^\top x^*
+
2C
\nonumber\\
&=
\sum_{t=1}^{T} X_t
+
\sum_{t=1}^{T} \theta_t^\top x_t
-
\sum_{t=1}^{T} \theta_t^\top x^*
+
2C.
\label{eq:loss-split}
\end{align}

By Lemma~\ref{lemm:bound-regret}, we have
\begin{align}
\sum_{t=1}^{T} \theta_t^\top x_t
-
\sum_{t=1}^{T} \theta_t^\top x^*
\le
4 \eta d^2 T
+
\frac{\nu \ln \!\left( \tfrac{1}{\delta} \right)}{\eta}
+
2 C d (\nu + 2 \sqrt{\nu}) \frac{1-\delta}{\delta}
+
\delta D G T ,
\label{eq:linear-regret-bound}
\end{align}
where
\(
\eta = \frac{\sqrt{\nu \ln T}}{2 d \sqrt{T}}
\).

Finally, since \(\{X_t\}_{t=1}^T\) forms a martingale difference sequence with
\(
|X_t| \le B^* = b = G D
\)
and
\(
\sum_{t=1}^{T} \mathbb{E}_t[X_t^2] \le V = G^2 D^2 T
\),
Lemma~\ref{lemm:martingale} implies that, with probability at least \(1-\gamma\),
\begin{align}
\sum_{t=1}^{T} X_t
\le
S \Bigl(
\sqrt{8 V \ln (S/\gamma)}
+
2 B^* \ln (S/\gamma)
\Bigr),
\label{eq:martingale-bound}
\end{align}
where
\(
S = \lceil \ln (G D) \rceil \, \lceil \ln ((G D)^2 T) \rceil
\).

Combining \eqref{eq:loss-split}–\eqref{eq:martingale-bound} completes the proof.
\end{proof}

\subsection{Application to black-box optimization}
Through an online-to-offline transformation, our results can also be applied to black-box optimization.
This setting is particularly important because most existing theoretical analyses of black-box optimization are restricted to linear, convex, or smooth objectives in adversarial environments, typically studied under the framework of bandit convex optimization.
Therefore, it is meaningful to investigate black-box optimization problems beyond these structural assumptions.
In contrast, the objective considered in this paper is neither linear nor smooth, even under mild assumptions.

Let $\hat{x}$ denote the output of Algorithm~\ref{alg1}.
It follows from Theorem~1 that
$f(\hat{x}) - \min\limits_{x \in \mathcal{K}} f(x)
\leq \frac{4d\sqrt{\nu \ln \frac{1}{\delta}}}{\sqrt{T}}
+ \delta GD
+ d(\nu + 2\sqrt{\nu})\big(\frac{1-\delta}{\delta}\big)\frac{C}{T}
+ \frac{2C}{T}$.
Moreover, we establish a lower bound of order $\frac{2C}{T}$ for this problem; see Lemma~\ref{lemm:blackbox-lower}.
Consequently, the gap between the upper and lower bounds is
$\frac{4d\sqrt{\nu \ln \frac{1}{\delta}}}{\sqrt{T}}
+ \delta GD
+ d(\nu + 2\sqrt{\nu})\big(\frac{1-\delta}{\delta}\big)\frac{C}{T}$.
This gap indicates the potential existence of intermediate regimes between fully adversarial environments and standard stochastic settings, where tighter guarantees and more efficient algorithms may be possible.
Exploring such regimes is an interesting direction for future research.

\section{Lower bound}
In this section, we establish a lower bound on the regret.
We begin by introducing the following definition.

\begin{definition} [$\epsilon$-approximately linear function]
A function $f: \mathcal{K} \to \Real$ is 
$\epsilon$-approximately linear
if there exists $\theta_f \in \Real^d$ such that 
$\forall x\in\mathcal{K}$, $|f(x) - \theta_f^\top x| \leq \epsilon$. 
\end{definition}
Observe that a sequence of $\epsilon$-approximately linear functions
$\{f_t\}_{t=1}^T$
is a special case of a $C$-approximately linear function sequence,
with cumulative budget $C = \epsilon T$.
Consequently, establishing a regret lower bound for bandit optimization
with $\epsilon$-approximately linear loss functions
immediately yields a regret lower bound
for $C$-approximately linear function sequences.

We consider a black-box optimization problem for 
the set $\Fset$ of $\epsilon$-approximately linear functions $f:\Kset \to \Real$.
In the problem, we are given access to the oracle $O_f$ for some $f\in \Fset$, 
which returns the value $f(x)$ given an input $x\in \Kset$. 
The goal is to find a point $\hat{x} \in \Kset$ such that 
$f(\hat{x}) -\min_{x\in \Kset} f(x)$ is small enough.
Then, the following statement holds.
 
\begin{lemma}
\label{lemm:blackbox-lower}
For any algorithm $\mathcal{A}$ for the black-box optimization problem for $\Fset$,
there exists an $\epsilon$-approximately linear function $f \in \Fset$ such that 
the output $\hat{x}$ of $\mathcal{A}$ satisfies 
\begin{equation}
f(\hat{x})-\min_{x\in\mathcal{K}}f(x) \geq 2\epsilon.
\end{equation}
\end{lemma}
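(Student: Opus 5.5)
The plan is an adversary (indistinguishability) argument. We construct a family of $\epsilon$-approximately linear functions that all share the same linear part $\theta_f$. Every member of the family returns exactly the same values at every point the algorithm ever queries, but the members differ in where their minimizers lie. The simplest choice is $\theta_f=0$: then each $f\in\Fset$ in the family takes values in $[-\epsilon,\epsilon]$, and $f(x)=\theta_f^\top x+\sigma(x)$ with $|\sigma|\le\epsilon$ holds trivially.

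\textbf{Step 1 (adaptive oracle).} Fix any algorithm $\mathcal{A}$, deterministic first and randomized later. The oracle answers every query with the value $\epsilon$. This generates a well-defined query sequence $x^{(1)},x^{(2)},\dots$, and $\mathcal{A}$ produces a final output $\hat{x}$ that depends only on the answers. Let $Q$ be the finite set of queried points together with $\hat{x}$.

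\textbf{Step 2 (defining the function).} Choose any point $z\in\Kset\setminus Q$, which exists because $\Kset$ contains the unit ball and is therefore infinite. Define
\[
f(x)=\begin{cases}-\epsilon, & x=z,\\ \epsilon, & \text{otherwise.}\end{cases}
\]
This $f$ is $\epsilon$-approximately linear with $\theta_f=0$. It is consistent with every answer the oracle gave, so on input $f$ the algorithm $\mathcal{A}$ makes the same queries and outputs the same $\hat{x}\neq z$. Hence
\[
f(\hat{x})-\min_{x\in\Kset}f(x)=\epsilon-(-\epsilon)=2\epsilon .
\]
Since the paper places no continuity requirement on $\Fset$ (the losses are allowed to be non-smooth and even arbitrary perturbations), a pointwise spike is admissible. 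If continuity were desired, we would instead use a narrow tent of height $2\epsilon$ centered at $z$ whose support avoids the finite set $Q$; this gives the same conclusion.

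\textbf{Step 3 (randomized algorithms).} A randomized $\mathcal{A}$ is a distribution over deterministic algorithms, and the queried set $Q$ is then random. Here we pick $z$ uniformly from a continuum, for example a small ball. Since $Q$ is finite, $z\notin Q$ almost surely, so the expected suboptimality over $z$ equals $2\epsilon$, and therefore some fixed $f$ achieves expected gap $2\epsilon$.

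The main obstacle is not technical but interpretive. We must make sure the adversarial function may depend on the algorithm, which holds because the statement is quantified as ``for any $\mathcal{A}$ there exists $f$'', and we must confirm that discontinuous perturbations belong to $\Fset$. If the intended class requires the bound $|f|\le 1$, this is fine as long as $\epsilon\le 1$. The remaining content is just the indistinguishability observation in Step 2.
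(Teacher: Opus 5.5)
Your proposal is correct and follows the paper's proof essentially verbatim: you answer every query with $\epsilon$, place a $-\epsilon$ spike (with $\theta_f=0$) at a point $z$ outside the finite set of queries and output, and for randomized $\mathcal{A}$ draw $z$ from a continuous distribution and use Fubini to get a $z$ that is hit with probability zero. Your ``expected gap $2\epsilon$'' phrasing in Step 3 is equivalent to the paper's almost-sure conclusion, because the gap for this family never exceeds $2\epsilon$.
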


\begin{proof}   
    Firstly, suppose that 
    the algorithm $\mathcal{A}$ is deterministic. 
    At iteration $t=1,...,T$, for any feedback $y_{1},...,y_ {t-1}\in\Real$,  
    $\mathcal{A}$ should choose the next query point $x_{t}$ based on the data observed so far. 
    That is, 
    \begin{equation}
        x_{t}=\mathcal{A}((x_{1}, y_{1}),...,(x_{t-1}, y_{t-1})).
    \end{equation}
    Assume that the final output $\hat{x}$ is returned after $T$ queries to the oracle $O_f$. 
    In particular, we fix the $T$ feedbacks $y_1=y_2=\dots=y_T=\epsilon$.
    Let $z\in\mathcal{K}$ be such that $z\notin\{x_{1},...,x_{T},\hat{x}\}$. 
    Then we define a function $f: \mathcal{K}\to\Real$ is as
    \begin{equation}
    \label{lower-bound-equation}
         f(x)= \begin{cases}
            \epsilon,\quad &x\not= z, \\
            -\epsilon,\quad &x=z.
        \end{cases}  
    \end{equation} 
    The function $f$ is indeed an $\epsilon$-approximately linear function, 
    as $f(x)=0^\top x + \sigma(x)$, where $\sigma(x)=\epsilon$ for $x \neq z$ and 
    $\sigma(x)=-\epsilon$ for $x =z$. 
    Further, we have 
    
    \begin{equation}
        f(\hat{x})-\min_{x\in\mathcal{K}} f(x) \geq 2\epsilon. 
    \end{equation}
    
    Secondly, if algorithm $\mathcal{A}$ is randomized. 
    It means each $x_{t}$ is chosen randomly.  
    We assume the same feedbacks $y_1=y_2=\dots=y_T=\epsilon$.
    Let $X=\{x_{1},...,x_{T},\hat{x}\}$.
    Then, there exists a point $z\in\mathcal{K}$ such that $P_{X}(z\in X)=0$, since 
    $\duoE_{z'}[P_{X}(z'\in X|z')]=P_{z',X}(z'\in X)=\duoE_{X}[P_{z'}(z'\in X|X)] =0$, 
    where the expectation on $z'$ is defined w.r.t.\ the uniform distribution over $\Kset$.
    For the objective function $f$ defined in (~\ref{lower-bound-equation}), we have
    $f(\hat{x})-\min_{x\in\mathcal{K}} f(x) \geq 2\epsilon$ while $f$ is $\epsilon$-approximately linear.
\end{proof}

\begin{theorem}
For any horizon $T \ge 1$ and any player, there exists an adversary generating
a $C$-approximately linear function sequence such that the regret is at least
$2C$.
\end{theorem}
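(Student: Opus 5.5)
The plan is to lift the construction in the proof of Lemma~\ref{lemm:blackbox-lower} from one round to all $T$ rounds. We take $\epsilon = C/T$. Then any sequence of $\epsilon$-approximately linear functions is a $C$-approximately linear function sequence, as noted after the definition of $\epsilon$-approximate linearity. We choose $\theta_t = 0$ for every $t$, so the linear part is trivial, and the perturbations carry the whole loss.

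\textbf{Deterministic player.} We feed the player the constant feedback $\epsilon$ in every round. Since the feedback is fixed, the played points $y_1,\ldots,y_T$ are determined in advance. Because $\mathcal{K}$ contains the unit ball, it is infinite, so we can pick $z \in \mathcal{K}\setminus\{y_1,\ldots,y_T\}$. For every $t$ we set $f_t(x)=\epsilon$ for $x\neq z$ and $f_t(z)=-\epsilon$. Then $|\sigma_t(x)|=\epsilon$ for all $x$, so $\sum_t|\sigma_t(x_t)| = T\epsilon = C$ for any choice of points, and Definition~\ref{def1} holds. The feedback actually observed equals the assumed feedback $\epsilon$, so the player's trajectory is consistent with this adversary. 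The player incurs $T\epsilon$, while the comparator $x=z$ attains $\sum_t f_t(z) = -T\epsilon$. The regret is therefore $2T\epsilon = 2C$. We also need $|f_t|\le 1$; this follows from the standing assumption $C/T\le 2/3$.

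\textbf{Randomized player.} We again fix the constant feedback $\epsilon$. The player's actions $Y=\{y_1,\ldots,y_T\}$ then have some fixed joint distribution that does not depend on $z$. We repeat the Fubini argument from the proof of Lemma~\ref{lemm:blackbox-lower}. Draw $z'$ uniformly from $\mathcal{K}$, which has positive volume. For each realization of $Y$, $\Pr_{z'}(z'\in Y)=0$, because $Y$ is finite. Hence $\mathbb{E}_{z'}[\Pr_Y(z'\in Y)]=0$, so some $z$ satisfies $\Pr_Y(z\in Y)=0$. We fix this $z$ before the game, which keeps the adversary oblivious, and build $f_t$ as above. Almost surely the player never hits $z$, so the regret equals $2C$ almost surely, and in particular in expectation.

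\textbf{Main obstacle.} The delicate point is consistency and obliviousness in the randomized case. The construction of $z$ uses the distribution of the player's actions under the constant feedback $\epsilon$. We must argue that under the constructed $f_t$ the observed feedback is $\epsilon$ almost surely, so the true action distribution coincides with the one used to select $z$. This is handled by induction on $t$: as long as all feedback so far equals $\epsilon$, the next action has the assumed conditional law, and it avoids $z$ with probability one. A secondary point is the degenerate case $C=0$, where the bound $2C=0$ holds trivially.
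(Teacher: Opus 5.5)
Your proof is correct. It uses exactly the paper's hard instance: $\theta_t=0$, $\epsilon=C/T$, and a loss equal to $\epsilon$ everywhere except $-\epsilon$ at a point $z$ that the player almost surely never plays. The difference is in how the bound is extracted.

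The paper argues by contradiction. From a player with regret below $2C$ it builds a black-box optimizer that outputs $\hat{x}\in\arg\min_{t} f(x_t)$, so that $f(\hat{x})-\min_{x} f(x)\le \mathrm{Regret}/T<2\epsilon$, and then invokes Lemma~\ref{lemm:blackbox-lower}.

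You instead compute the regret of the repeated hard function directly. This removes the online-to-offline conversion and gives the slightly sharper conclusion that the regret equals $2C$ almost surely. Your route also makes the consistency step for randomized players explicit: under the constructed $f_t$, the realized feedback equals the assumed constant $\epsilon$ almost surely, so the action law used to choose $z$ is the true one. The paper's proof of Lemma~\ref{lemm:blackbox-lower} leaves this step implicit.

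What the paper's packaging buys is reuse. The same lemma also serves as the $2\epsilon$ lower bound for the black-box (offline) problem discussed in the application section.
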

\noindent
In particular, this lower bound applies to sequences of
$\epsilon$-approximately linear functions, for which $C = \epsilon T$.


\begin{proof}
We prove the statement by contradiction. 
Suppose that there exists a player whose regret is 
less than $2C$. 
Then we can construct an algorithm for the black-box optimization problem from it by 
feeding the online algorithm with $T$ feedbacks of the black-box optimization problem and 
by setting $\hat{x} \in \arg\min\limits_{t \in [T]} f(x_t)$. Then, 
\[
f(\hat{x}) - \min_{x \in \Kset}f(x) \leq \frac{\sum_{t=1}^T f(x_t) -\sum_{t=1}^T\min\limits_{x\in \Kset}f(x) }{T} < 2\epsilon, 
\]
which contradicts Lemma~\ref{lemm:blackbox-lower}.
\end{proof}

This lower bound implies that an $\Omega(C)$ regret is unavoidable
for the bandit optimization problem with $C$-approximately linear function sequences.
We conjecture that this lower bound can be further tightened to $\Omega(dC)$,
and leave this as an open problem for future work.

\section{Experiments}
\begin{figure}[h]
\includegraphics[width=0.9\textwidth]{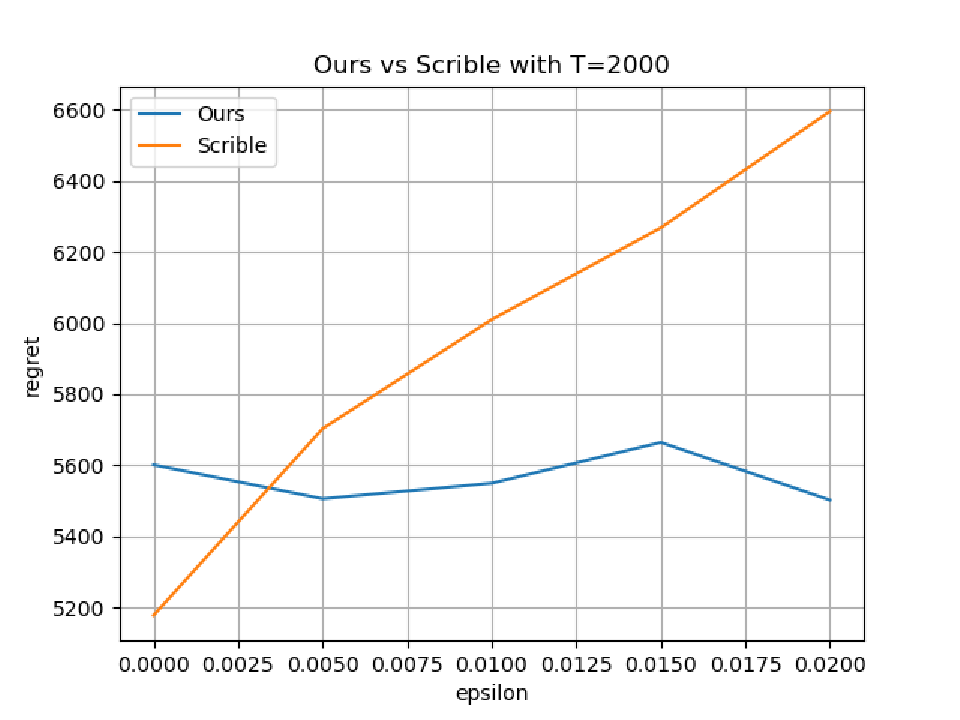}
\caption{Average regret of algorithms for artificial data sets. The yellow line corresponds to the results of Algorithm~\ref{alg1}, the blue line corresponds to the SCRiBLe algorithm~\cite{abernethy2008competing}} \label{fig01}
\end{figure}

We conduct comparative experiments over a range of $C$ values,
namely $0$, $4010$, $4020$, $4030$, and $4040$, evaluating Algorithm~\ref{alg1} and SCRiBLe \citep{abernethy2008competing}.
The artificial data are generated as follows.
The input set is constructed such that each element $x \in \mathbb{R}^d$ satisfies $\|x\|_2 \leq D$.
The loss vectors $\theta_1,\ldots,\theta_T$ are $d$-dimensional vectors sampled randomly before the experiment,
each with Euclidean norm bounded by $G$.
The $\nu$-self-concordant barrier is defined as
\(
\mathcal{R}(x) = -\log\!\left(1 - \frac{\|x\|^2}{D^2}\right),
\)
and the learning rate is set to
\(
\eta = \frac{\sqrt{\log(1/\delta)}}{4 d \sqrt{T}}.
\)

In our experiments, we set $D=5$, $G=3$, $d=5$, and $\nu=1$.
We consider the following values of the perturbation magnitude:
\(
\epsilon = 0,\; 0.005,\; 0.01,\; 0.015,\; 0.02.
\)
Given $\epsilon$, the perturbation is defined as
\(
\sigma(y_t) = \epsilon \sin\!\big((y_t^\top l)\pi\big),
\)
where $l \in \mathbb{R}^d$ is a random vector.
To further amplify the effect of perturbations near the boundary of the feasible set $\mathcal{K}$,
we increase the perturbation magnitude when $y_t$ approaches the boundary.
Specifically, we add a constant offset and redefine
\(
\sigma(y_t) \leftarrow \sigma(y_t) + 2.
\)
With this construction, for each value of $\epsilon$, the cumulative magnitude of the perturbation
is bounded by a corresponding constant $C$, namely,
\(
\sum_{t=1}^{T} \lvert \sigma_t (y_t)\rvert \le C.
\)
Note that the perturbation is not worst-case, which may explain why the result of Algorithm~\ref{alg1} exhibits little sensitivity to increasing $\epsilon$.
Each algorithm is run with a fixed time horizon of $T=2000$.
For each value of $C$, we repeat the experiment $10$ times for each algorithm
and report the average regret.
In the figure, the x-axis represents the value of $\epsilon$,
while the y-axis shows the corresponding regret.

As shown in the figure above, when $C=0$, SCRiBLe \citep{abernethy2008competing} outperforms Algorithm~\ref{alg1}.
One possible explanation is that, in our experimental setting, the optimal point lies on the boundary of the feasible set $\mathcal{K}$,
whereas Algorithm~\ref{alg1} restricts its actions to the shrunk domain $\mathcal{K}_\delta$.
Consequently, Algorithm~\ref{alg1} cannot select boundary points and incurs a larger loss in this regime.
As $\epsilon$ (and hence $C$) increases, the perturbation near the boundary becomes more pronounced,
which favors Algorithm~\ref{alg1} and allows it to outperform SCRiBLe.
We also note that the perturbation admits many possible choices,
and alternative perturbation designs may lead to different experimental outcomes.

\section{Conclusion}
In this work, we study a bandit optimization problem with non-convex and non-smooth losses, where each loss is composed of a linear component and an arbitrary adversarial perturbation revealed after the player’s action, subject to a global perturbation budget.
We develop a novel regret analysis tailored to this setting
and establish both expected and high-probability regret upper bounds, together with
a lower bound that highlights the intrinsic difficulty induced by the perturbations.
Our results extend classical bandit linear optimization to a strictly more general
regime and, through a standard online-to-offline conversion, suggest theoretical
guarantees for black-box optimization beyond the standard convex or smooth assumptions.
An interesting direction for future work is to extend our analysis to broader classes of losses, such as convex objectives with adversarial perturbations, and to further characterize intermediate regimes between stochastic and fully adversarial settings.

\section*{Statements and Declarations}

\begin{itemize}
\item Funding: This work was supported by WISE program (MEXT) at Kyushu
University,  JSPS KEKENHI Grant Numbers JP23K24905,
JP23K28038, and JP24H00685, respectively.

\item Conflict of interest/Competing interests: The authors have no competing interests to declare that are
relevant to the content of this article.
\item Ethics approval and consent to participate: Not applicable.
\item Consent for publication: All authors have read and approved the final manuscript. 

\item Data availability:
The data supporting the findings of this study were generated by the authors for simulation purposes.
No external datasets were used.
The data can be regenerated using the experimental procedure described in the paper.

\item Materials availability:
Not applicable.

\item Code availability:
The code used to generate the results in this study was developed by the authors and is available from the corresponding author upon reasonable request.

\item Author contribution: Z.C. was primarily responsible for the conception, design, theoretical development, analysis, and drafting of the manuscript.
K.H. and E.T. contributed to the development of the theoretical framework, provided guidance on the analysis, and critically revised the manuscript.
All authors reviewed and approved the final manuscript.

\end{itemize}

\appendix
\numberwithin{equation}{section}





\bibliography{main, hatano}

@inproceedings{flaxman-etal:soda05,
   author = {Abraham D Flaxman and Adam Tauman Kalai and H Brendan Mcmahan},
   doi = {10.5555/1070432},
   booktitle = {Proceedings of the sixteenth annual ACM-SIAM symposium on Discrete algorithms},
   pages = {385 - 394},
   title = {Online convex optimization in the bandit setting: gradient descent without a gradient},
   year = {2005},
}

@article{lattimore:msl20,
   author = {Tor Lattimore},
   doi = {10.4171/msl/17},
   issn = {2520-2316},
   issue = {3},
   journal = {Mathematical Statistics and Learning},
   month = {10},
   pages = {311-334},
   title = {Improved regret for zeroth-order adversarial bandit convex optimisation},
   volume = {2},
   year = {2020},
}

@article{abbasi2011improved,
  title={Improved algorithms for linear stochastic bandits},
  author={Abbasi-Yadkori, Yasin and P{\'a}l, D{\'a}vid and Szepesv{\'a}ri, Csaba},
  journal={Advances in neural information processing systems},
  volume={24},
  year={2011}
}

@article{lee2020bias,
  title={Bias no more: high-probability data-dependent regret bounds for adversarial bandits and mdps},
  author={Lee, Chung-Wei and Luo, Haipeng and Wei, Chen-Yu and Zhang, Mengxiao},
  journal={Advances in neural information processing systems},
  volume={33},
  pages={15522--15533},
  year={2020}
}

@inproceedings{abernethy2008competing,
  title={Competing in the Dark: An Efficient Algorithm for Bandit Linear Optimization.},
  author={Abernethy, Jacob D and Hazan, Elad and Rakhlin, Alexander},
  booktitle={COLT},
  pages={263--274},
  year={2008}
}

@book{nesterov1994interior,
  title={Interior-point polynomial algorithms in convex programming},
  author={Nesterov, Yurii and Nemirovskii, Arkadii},
  year={1994},
  publisher={SIAM}
}

@article{nemirovski2004interior,
  title={Interior point polynomial time methods in convex programming},
  author={Nemirovski, Arkadi},
  journal={Lecture notes},
  volume={42},
  number={16},
  pages={3215--3224},
  year={2004},
  publisher={Citeseer}
}

@article{hazan2016introduction,
  title={Introduction to online convex optimization},
  author={Hazan, Elad and others},
  journal={Foundations and Trends{\textregistered} in Optimization},
  volume={2},
  number={3-4},
  pages={157--325},
  year={2016},
  publisher={Now Publishers, Inc.}
}

@inproceedings{awerbuch2004adaptive,
  title={Adaptive routing with end-to-end feedback: Distributed learning and geometric approaches},
  author={Awerbuch, Baruch and Kleinberg, Robert D},
  booktitle={Proceedings of the thirty-sixth annual ACM symposium on Theory of computing},
  pages={45--53},
  year={2004}
}

@inproceedings{mcmahan2004online,
  title={Online geometric optimization in the bandit setting against an adaptive adversary},
  author={McMahan, H Brendan and Blum, Avrim},
  booktitle={Learning Theory: 17th Annual Conference on Learning Theory, COLT 2004, Banff, Canada, July 1-4, 2004. Proceedings 17},
  pages={109--123},
  year={2004},
  organization={Springer}
}

@inproceedings{bubeck2012towards,
  title={Towards minimax policies for online linear optimization with bandit feedback},
  author={Bubeck, S{\'e}bastien and Cesa-Bianchi, Nicolo and Kakade, Sham M},
  booktitle={Conference on Learning Theory},
  pages={41--1},
  year={2012},
  organization={JMLR Workshop and Conference Proceedings}
}

@article{ghai2022non,
  title={Non-convex online learning via algorithmic equivalence},
  author={Ghai, Udaya and Lu, Zhou and Hazan, Elad},
  journal={Advances in Neural Information Processing Systems},
  volume={35},
  pages={22161--22172},
  year={2022}
}

@inproceedings{agarwal2019learning,
  title={Learning in non-convex games with an optimization oracle},
  author={Agarwal, Naman and Gonen, Alon and Hazan, Elad},
  booktitle={Conference on Learning Theory},
  pages={18--29},
  year={2019},
  organization={PMLR}
}

@inproceedings{gao2018online,
  title={Online learning with non-convex losses and non-stationary regret},
  author={Gao, Xiand and Li, Xiaobo and Zhang, Shuzhong},
  booktitle={International Conference on Artificial Intelligence and Statistics},
  pages={235--243},
  year={2018},
  organization={PMLR}
}

@article{yang2018optimal,
  title={An optimal algorithm for online non-convex learning},
  author={Yang, Lin and Deng, Lei and Hajiesmaili, Mohammad H and Tan, Cheng and Wong, Wing Shing},
  journal={Proceedings of the ACM on Measurement and Analysis of Computing Systems},
  volume={2},
  number={2},
  pages={1--25},
  year={2018},
  publisher={ACM New York, NY, USA}
}

@article{amani2019linear,
  title={Linear stochastic bandits under safety constraints},
  author={Amani, Sanae and Alizadeh, Mahnoosh and Thrampoulidis, Christos},
  journal={Advances in Neural Information Processing Systems},
  volume={32},
  year={2019}
}

@article{abernethy2012interior,
  title={Interior-point methods for full-information and bandit online learning},
  author={Abernethy, Jacob D and Hazan, Elad and Rakhlin, Alexander},
  journal={IEEE Transactions on Information Theory},
  volume={58},
  number={7},
  pages={4164--4175},
  year={2012},
  publisher={IEEE}
}

@inproceedings{bartlett2008high,
  title={High-probability regret bounds for bandit online linear optimization},
  author={Bartlett, Peter and Dani, Varsha and Hayes, Thomas and Kakade, Sham and Rakhlin, Alexander and Tewari, Ambuj},
  booktitle={Proceedings of the 21st Annual Conference on Learning Theory-COLT 2008},
  pages={335--342},
  year={2008},
  organization={Omnipress}
}

@article{rodemann2024reciprocal,
  title={Reciprocal learning},
  author={Rodemann, Julian and Jansen, Christoph and Schollmeyer, Georg},
  journal={Advances in Neural Information Processing Systems},
  volume={37},
  pages={1686--1724},
  year={2024}
}

@inproceedings{ito2023best,
  title={Best-of-three-worlds linear bandit algorithm with variance-adaptive regret bounds},
  author={Ito, Shinji and Takemura, Kei},
  booktitle={The Thirty Sixth Annual Conference on Learning Theory},
  pages={2653--2677},
  year={2023},
  organization={PMLR}
}

@article{ito2023exploration,
  title={An exploration-by-optimization approach to best of both worlds in linear bandits},
  author={Ito, Shinji and Takemura, Kei},
  journal={Advances in Neural Information Processing Systems},
  volume={36},
  pages={71582--71602},
  year={2023}
}

@inproceedings{cheng2025adversarial,
  title={Adversarial bandit optimization for approximately linear functions},
  author={Cheng, Zhuoyu and Hatano, Kohei and Takimoto, Eiji},
  booktitle={International Conference on Discovery Science},
  pages={495--509},
  year={2025},
  organization={Springer}
}


\end{document}